\def\Mag{\mathrm{Mag}}
\def\reals{\mathbb{R}}
\title{Approximating Metric Magnitude of Point Sets}
\author{
     Rayna Andreeva\textsuperscript{\rm 1},
    James Ward\textsuperscript{\rm 1},
    Primoz Skraba\textsuperscript{\rm 2},
    Jie Gao\textsuperscript{\rm 3},
    Rik Sarkar\textsuperscript{\rm 1}
}
\begin{document}

\maketitle

\begin{abstract}
Metric magnitude is a measure of the ``size" of point clouds with many desirable geometric properties. It has been adapted to various mathematical contexts and recent work suggests that it can enhance machine learning and optimization algorithms. But its usability is limited due to the computational cost when the dataset is large or when the computation must be carried out repeatedly (e.g. in model training). In this paper, we study the magnitude computation problem, and show efficient ways of approximating it. We show that it can be cast as a convex optimization problem, but not as a submodular optimization. The paper describes two new algorithms -- an iterative approximation algorithm that converges fast and is accurate, and a subset selection method that makes the computation even faster. It has been previously proposed that magnitude of model sequences generated during stochastic gradient descent is correlated to generalization gap. Extension of this result using our more scalable algorithms shows that longer sequences in fact bear higher correlations. We also describe new applications of magnitude in machine learning -- as an effective regularizer for neural network training, and as a novel clustering criterion.

\end{abstract}

\section{Introduction}

Magnitude is a relatively new isometric invariant of metric spaces. It was introduced to characterize ecology and biodiversity data, and was initially defined as an Euler Characteristic of certain finite categories~\cite{leinster2008euler}. Similar to quantities such as the cardinality of a point set, the dimension of vector spaces and Euler characteristic of topological spaces, Magnitude can be seen as measuring the ``effective size'' of mathematical objects. 
See Figure~\ref{fig:three_points_illustrative_example} for an intuition of Magnitude of Euclidean points. 
It has been defined, adapted and studied in many different contexts such as topology, finite metric spaces,  compact metric spaces, graphs, and machine learning~\cite{leinster2013magnitude,leinster2013asymptotic,barcelo2018magnitudes,leinster2019magnitude,leinster2021magnitude,kaneta2021magnitude,giusti2024eulerian}.

\begin{figure}
    \centering
    \includegraphics[width=0.9\linewidth]{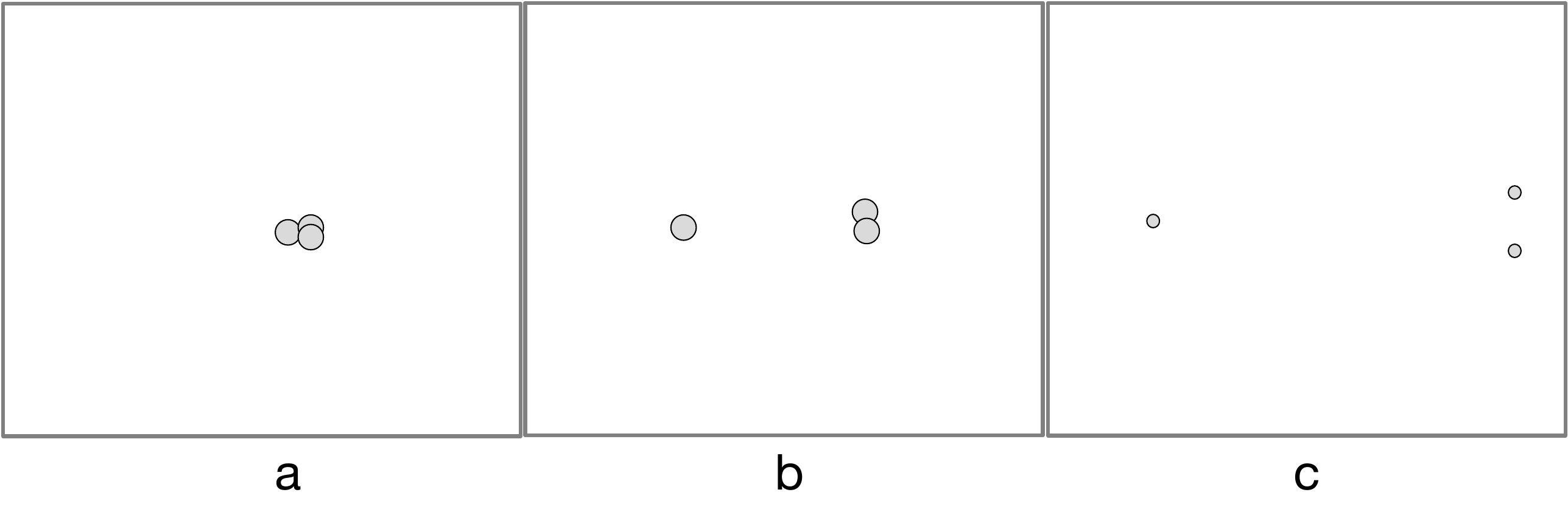}
    \caption{Consider the magnitude function of a  3-point space, visualized above at different scales. (a) For a small value of the scale parameter (e.g. $t=0.0001$), all the three points are very close to each other and appears as a single unit. This space has magnitude close to $1$. (b) At $t=0.01$ the distance between the two points on the right is still small and they are clustered together, and the third point is farther away. This space has Magnitude close to $2$ (c) When $t$ is large, all the three points are distinct and far apart, and Magnitude is 3.}
    \label{fig:three_points_illustrative_example}
\end{figure}

In machine learning and data science, the magnitude of a point cloud can provide useful information about the structure of data. It has recently been applied to study the boundary of a metric space~\cite{bunch2021weighting},  edge detection for images~\cite{adamer2024magnitude},  diversity~\cite{limbeck2024metric} and dimension~\cite{andreeva_metric_2023} of sets of points in Euclidean space, with applications in data analysis as well as generalization of models~\cite{andreeva2024topological}. Wider applications of magnitude are limited by the computation cost. For a set of $n$ points, the standard method of computing Magnitude requires inverting an $n \times n$ matrix. The best known lower bound for matrix multiplication and inversion is $\Omega(n^2\log n)$ \cite{raz2002complexity}; the commonly used Strassen's algorithm~\cite{strassen1969gaussian} has complexity $O(n^{2.81})$\footnote{Faster algorithms for matrix inversion exist, such as the Coppersmith-Winograd algorithm~\cite{COPPERSMITH1990251} with running time $O(n^{2.376})$ and Optimized CW-like algorithms with the best running time $O(n^{2.371552})$~\cite{williams24new}.}. By definition, Magnitude computation requires consideration of all pairs of input points, making it expensive for large datasets.

\myparagraph{Our contributions.} In this paper, we take the approach that for many scenarios in data science, an approximate yet fast estimate of magnitude is more likely to be useful, particularly in real-world modern applications where datasets and models are large and noisy and often require repeated computation. 

Given a point set $X\subset \reals^D$, we first show (Section~\ref{sec:convex}) that the problem of computing the magnitude $\Mag(X)$ can be seen as finding the minimum of a convex function, and thus can be approximated using suitable gradient descent methods. Next, in Section~\ref{sec:normalization} we define a new algorithm that iteratively updates a set of weights called the Magnitude weighting to converge to the true answer. This method converges quickly and is faster than matrix inversion or gradient descent. 

While avoiding inversion, both these methods need $n\times n$ matrices to store and use all pairs of similarities between points. To improve upon this setup, we take an approach of selecting a smaller subset $S\subset X$ of representative points so that $\Mag(S)$ approximates $\Mag(X)$. We first prove that Magnitude is not a submodular function, that is, if we successively add points to $S$, $\Mag(S)$ does not satisfy the relevant diminishing returns property. In fact, for arbitrarily high dimension $D$, the increase in $\Mag(S)$ can be arbitrarily large with the addition of a single point. Though in the special case of $D=1$ $\Mag(S)$ is in fact submodular, and the standard greedy algorithm~\cite{nemhauser1978analysis} for submodular maximization can guarantee an approximation of $(1-1/e)$ (Section ~\ref{sec:submodularity}). In practice, the greedy algorithm is found to produce accurate approximations on all empirical datasets -- both real-world ones and synthetic ones. This algorithm adds points to $S$ one by one; in each step it iterates over all remaining points to find the one that maximizes $\Mag(S)$. These magnitude computations are faster due to the smaller size of $S$, but the costs add up as they are repeated over $X$. 

Section~\ref{sec:hierarchy} describes an approach to speed up the approximations further. It uses properties of Magnitude such as monotonicity and growth with scale, to develop a selection method -- called Discrete centers -- that does not require repeated computation of Magnitude. It is particularly useful for computing the {\em Magnitude Function} -- which is magnitude as a function of scale, and useful in dimension computation~\cite{andreeva_metric_2023}.  This method can also easily adapt to dynamic datasets where points are added or removed. Faster estimates of magnitude allows new applications of Magnitude in machine learning. Section~\ref{sec;applications} describes use of Magnitude as a regularizer for neural network, and a clustering algorithm similar to density based clustering methods, using Magnitude as a clustering criterion.

Experiments in Section~\ref{sec:experiments} show that the approximation methods are fast and accurate. Iterative Normalization outperforms inversion for larger dataset sizes and converges fast; for the subset selection algorithms, Discrete centers approximates the Greedy Maximization approach empirically at a fraction of the computational cost. The more scalable computation allows us to produce new results in the topic of generalization, where we extend prior work on computing topological complexities based on magnitude \cite{andreeva2024topological} to a larger number of training trajectories, extending from $5 \times 10^3$ due to computational limitations to $10^4$, and observe that the correlation coefficients average Granulated Kendall ($\boldsymbol{\Psi}$) and Kendall tau ($\tau$) improve significantly with the increased number of trajectories. The new regularization and clustering methods based on Magnitude are also shown to be effective in practice. 

In the next section, we cover the technical background. Related works and discussion can be found in Section~\ref{sec:related}.


\section{Technical Background} 

This section introduces the definitions needed for the rest of the paper.

\subsection{Metric Magnitude}

For a finite metric space $(X,d)$ with distance function $d$, we define the similarity matrix $\zeta_{ij} = e^{-d_{ij}}$ for $i,j\in X$. The metric magnitude $\Mag(X,d)$ is defined~\cite{leinster2013magnitude} in terms of a {\em weighting} as follows. 

\begin{definition}[Weighting $w$] A weighting of $(X,d)$ is a function $w:X\to \reals$ such that $\forall i \in X, \sum_{j\in X}\zeta_{ij}w(j) = 1$. \label{def:weighting}
\end{definition}

We refer to the $w(i)$ as the magnitude weight of $i$, and interchangeably write it as $w_i$. 

\begin{definition}[Metric Magnitude $\Mag(X,d)$] The magnitude of $(X,d)$ is defined as $\Mag(X,d) = \sum_{i\in X}w(i)$, where $w$ is a weighting of $(X, d)$. \label{def:magnitude}
\end{definition}

The existence of a suitable $w$, and therefore magnitude of $(X, d)$ is not guaranteed in general, but it exists for finite point clouds $X\subset \reals^D$ with the ambient metric. In practice, metric magnitude is often computed by inverting the similarity matrix $\zeta$ and summing all the elements: 
\begin{align}\label{eq:magnitude}
    \Mag(X,d) = \sum_{ij}(\zeta^{-1})_{ij}.    
\end{align}
Observe that when $X$ is a finite subset of $\reals^D$, then $\zeta$ is a symmetric positive definite matrix, and the inverse exists~\cite{leinster2013magnitude}.

Magnitude is best illustrated when considering a few sample spaces with a small number of points.
For example, with a single point $a$, $\zeta_X$ is a $1 \times 1$ matrix with $\zeta_X^{-1} = 1$ and $\Mag(X) = 1$. (When distance measure $d$ is understood, such as in $R^D$, we often omit it in the notation.)

\begin{example}\label{ex:2points}
Consider the space of two points. Let $X=\{a,b\}$ be a finite metric space where $d(a,b)=d$. Then
\[
   \zeta_X=\begin{bmatrix} 1 & e^{-d}\\ e^{-d} & 1\end{bmatrix},  
\]

\noindent so that $
    \zeta_X^{-1} = \frac{1}{1 - e^{-2d}} \begin{bmatrix} 1 & -e^{-d}\\ -e^{-d} & 1\end{bmatrix},$ 

\noindent and therefore 
\[
    \mathrm{Mag}(X)=\dfrac{2 - 2e^{-d}}{1 - e^{-2d}}=\dfrac{2}{1+e^{-d}}. \label{eq:1}
\]
\end{example}

More information about a metric space can be obtained by looking at its rescaled counterparts. The resulting representation is richer, and is called the magnitude function, which we describe next.

\subsection{Scaling and the Magnitude Function}

For each value of a parameter $t\in \reals^+$, we consider the space where the distances between points are scaled by $t$, often written as $tX$. 
\begin{definition}[scaling and $tX$]
    Let $(X,d)$ be a finite metric space. We define $(tX, d_t)$ to be the metric space with the same points as $X$ and the metric $d_t(x,y) = td(x,y)$.
\end{definition}

\begin{definition}[Magnitude function]\label{def:magnitude_function}
The magnitude function of a finite metric space $(X,d)$ is the function $t \mapsto \mathrm{Mag}(tX)$, which is defined for all $t \in (0, \infty)$.
\end{definition}

This concept is best illustrated by Figure \ref{fig:three_points_illustrative_example}.

Magnitude Function is important in computing magnitude dimension, which is a determined by growth rate of $\Mag(tX)$ with respect to $t$. It is a quantity similar to fractal dimension and useful in predicting generalization of models computed via gradient descent~\cite{andreeva_metric_2023}. 

\subsection{Submodular Functions and Maximization Algorithm}

The notion of submodularity is inspired by diminishing returns observed in many real world problems. 

\begin{definition}[Submodular Function]\label{def:submodularity}
Given a set $V$, a function $f:2^V \to \reals$ is submodular set function if:
\[\forall S, T \subseteq V, f(S) + f(T) \geq f(S\cup T) + f(S \cap T).\]
\end{definition}

The definition implies that marginal utility of items or subsets are smaller when they are added to larger sets. An example is with sensor or security camera coverage, where the marginal utility of a new camera is smaller then its own coverage area as its coverage overlaps with existing cameras. 

The submodular maximization problem consists of finding a subset $S\subset V$ of a fixed size $k$ that maximizes the function $f$. It shows up in various areas of machine learning, such as active learning, sensing, summarization, feature selection and many others. See ~\cite{krause2014submodular} for a survey.
The maximization problem is NP-hard, but is often  approximated to within a factor of $1-1/e$ using a greedy algorithm~\cite{nemhauser1978analysis}.

\section{Approximation Algorithms}

We first examine algorithms that start with an arbitrary vector of weights for all points, and then iteratively adjusts them to approximate a Magnitude weighting. Then we describe methods that increase efficiency by selecting a small subset of points that have magnitude close to that of $X$. 

\subsection{Convex optimization formulation and gradient descent}

\label{sec:convex}

The problem of finding weights $w$ can be formulated as a convex optimization using the squared loss: 
\begin{align} \label{eq:convex} 
    \min_{w} \sum_i \left( \sum_j \zeta_{ij} w_j - 1\right)^2
\end{align}
This loss function is based on weighting (Definition~\ref{def:weighting}), and reflects the error with respect to an ideal weighting where for each $i$, $\sum_j \zeta_{ij} w_j$ will add up to $1$. 

This is a strongly convex optimization problem and can be addressed using methods suitable for such optimization, including gradient descent or stochastic gradient descent. 

\subsection{Iterative Normalization Algorithm}
\label{sec:normalization}

In this section we present a different algorithm that we call the Iterative Normalization Algorithm. It starts with a weight vector of all ones. Then for every point $i$, it computes the sum $G(i)= \sum_j \zeta_{ij} w_j$. For a proper magnitude weighting every $G(i)$ should be $1$, thus the algorithm simulates dividing by $G(i)$ to normalize the row to $1$, and saves $w_i \gets w_i/G(i)$. It does this in parallel for all rows (points).

\begin{algorithm}[h]
\caption{Iterative normalization algorithm for the approximation of magnitude}
\label{alg:normalization}
\textbf{Input}: The set of points $X$
\begin{algorithmic}
\State Initialise $w_i = 1$ for all $i\in X$
\While{not converged}
    \State Compute $G(i)= \sum_j \zeta_{ij} w_j$ for all $i\in X$
    \State Update $w_i = w_i/G(i)$ for all $i\in X$ 
\EndWhile    
\end{algorithmic}
\end{algorithm}

Observe that compared to matrix inversion, which has a complexity of $O(n^{2.371552})$,
the iterative normalization uses $O(n^2)$ per iteration, and achieves useable accuracy in relatively few iterations. In this problem, unlike usual optimization problems, we in fact know the optimum value of the loss for each point, and as a result we can use this approach of pushing the parameters toward this minimum value.

A caveat is that this algorithm produces a weighting that consists of positive weights. While individual magnitude weights can in principle be negative, Magnitude of a point cloud is always positive and in our experiments, the algorithm always finds a weighting whose sum converges toward the true magnitude. In this context, note that positive weights have been found to be relevant in predicting generalization of neural networks. See~\cite{andreeva2024topological}. 

\subsection{Approximation via greedy subset selection}\label{sec:submodularity}

To approximate more efficiently, we can attempt to identify a subset $S$ of points that approximate the magnitude of $X$. Magnitude increases monotonically with addition of points to $S$~\cite{leinster2013magnitude}, which suggests approximation via algorithms that greedily add points to $X$ similar to Nemhauser's submodular maximization~\cite{nemhauser1978analysis}. However, magnitude of a point set is not quite submodular, and thus the approximation guarantees do not carry over. 

The non-submodularity can be seen in the following counterexample. Let $e_1,...,e_D$ be the standard basis vectors of $\mathbb{R}^D$, so $e_1 = (1,0,...,0)$ etc. Let $t > 0$ be a real number and $te_1,...,te_D$ be the scaled basis vectors of $\mathbb{R}^D$, so $te_1 = (t,0,0...,0)$ etc. Consider $X = \{te_1,-te_1,...,te_D,-te_D\}$, with the usual metric. Thus $X$ consists of the points on the axes of $\mathbb{R}^D$ that are a distance of $t$ away from the origin. For a numerical example: when $t=5$ and $D=500$, we get $\Mag(X\cup \{0\}) - \Mag(X) \approx 7.18$. Thus, while the magnitude of a single point (origin) is $1$ by itself, adding it to $X$ produces an increase far greater than $1$.

This construction can be generalised to higher dimensions $D$, and behaves as follows in the limit:

\begin{theorem} \label{thm:non-submodularity}
    Let $X = \{te_1,-te_1,...,te_D,-te_D\}$ be a set of points in $\mathbb{R}^D$ as described above. Then in the limit:
    \[ \lim_{D \to \infty} \left( \Mag(X\cup \{0\}) - \Mag(X) \right) = \frac{(e^t-e^{t\sqrt{2}})^2}{e^{2t}-e^{t\sqrt{2}}}. \]
\end{theorem}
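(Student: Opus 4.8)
The plan is to exploit the large isometry group of $X$ to collapse the $(2D)\times(2D)$ and $(2D{+}1)\times(2D{+}1)$ weighting systems down to one or two unknowns, solve these small systems in closed form as functions of $D$ and $t$, and then send $D\to\infty$. Concretely, I would proceed in three steps: a symmetry reduction, an explicit solve, and an asymptotic-plus-algebra step.

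Step 1 (symmetry reduction). The group $G$ of signed coordinate permutations of $\reals^D$ acts on $X$ by isometries, and it acts on $X\cup\{0\}$ by isometries fixing the origin; on the $2D$ axis points $\{\pm te_i\}$ it acts transitively. Because $\zeta$ is positive definite, hence invertible, for any finite subset of $\reals^D$ (as stated in the excerpt), the weighting is unique. Precomposing a weighting with a $G$-isometry of the index set again yields a weighting (since $\zeta$ is $G$-invariant), so by uniqueness the weighting is itself $G$-invariant, i.e.\ constant on $G$-orbits. Therefore the weighting of $X$ assigns one common value $w$ to all $2D$ points, and the weighting of $X\cup\{0\}$ assigns one common value $w'$ to the $2D$ axis points and a value $v$ at the origin.

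Step 2 (closed forms and limit). The only distances appearing are $2t$ between antipodal pairs $\pm te_i$, $t\sqrt2$ between points on distinct axes, and $t$ from the origin to each axis point. Writing $A = 1 + e^{-2t} + 2(D-1)e^{-t\sqrt2}$, the weighting equation for an axis point of $X$ is $wA = 1$, so $\Mag(X) = 2Dw = 2D/A$. For $X\cup\{0\}$ one gets the $2\times2$ system $w'A + v\,e^{-t} = 1$ and $2De^{-t}w' + v = 1$; solving it and using $\Mag(X\cup\{0\}) = 2Dw' + v$ gives $\Mag(X\cup\{0\}) = 1 + \dfrac{2D(1-e^{-t})^2}{A - 2De^{-2t}}$. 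Since $A = 2De^{-t\sqrt2} + O(1)$ and $A - 2De^{-2t} = 2D\,(e^{-t\sqrt2} - e^{-2t}) + O(1)$ with $e^{-t\sqrt2} - e^{-2t} > 0$ (as $\sqrt2 < 2$), both expressions converge: $\Mag(X)\to e^{t\sqrt2}$ and $\Mag(X\cup\{0\})\to 1 + (1-e^{-t})^2/(e^{-t\sqrt2} - e^{-2t})$. Subtracting and clearing denominators — equivalently, with $a = e^t$, $b = e^{t\sqrt2}$, verifying the identity $(1-b)(a^2-b) + b(a-1)^2 = (a-b)^2$ — produces exactly $(e^t - e^{t\sqrt2})^2/(e^{2t} - e^{t\sqrt2})$.

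The two linear-algebra solves and the closing algebraic identity are routine but error-prone, so care is warranted there; the genuinely conceptual point — and the main thing that could go wrong if stated carelessly — is the symmetry reduction in Step 1, which relies on uniqueness of the weighting (from positive-definiteness of $\zeta$) together with $G$-equivariance. It is also worth recording explicitly that $\sqrt2 < 2$ is precisely what keeps the limiting denominator $e^{-t\sqrt2} - e^{-2t}$ (equivalently $e^{2t} - e^{t\sqrt2}$) nonzero, so the limit is well defined for every $t > 0$.
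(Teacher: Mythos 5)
Your proof is correct, and it follows the same overall strategy as the paper's: obtain closed forms for $\Mag(X)$ and $\Mag(X\cup\{0\})$ as rational functions of $D$, subtract, and let $D\to\infty$. The genuine difference is the middle step. The paper gets $\Mag(X)=2D/A$ by citing Leinster's formula for homogeneous spaces, and then computes $\Mag(X\cup\{0\})$ by block-inverting the bordered similarity matrix via the Schur complement and summing the entries of each block, arriving at $\Mag(X\cup\{0\}) = \frac{(1-2e^{-t})\Mag(X)+1}{1-e^{-2t}\Mag(X)}$; this agrees with your $1+\frac{2D(1-e^{-t})^2}{A-2De^{-2t}}$ after substituting $\Mag(X)=2D/A$, so the two routes meet at the same intermediate expression. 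You instead derive both closed forms from a single principle: the weighting is unique (by positive definiteness of $\zeta$) and hence invariant under the signed-permutation group, so the defining equations collapse to a $1\times 1$ and a $2\times 2$ linear system. This is more elementary and self-contained --- it needs neither the homogeneous-space proposition nor the block-inverse identity --- and it localizes all the risk in two small solves plus the closing identity $(1-b)(a^2-b)+b(a-1)^2=(a-b)^2$, which is valid. Your explicit remark that $\sqrt2<2$ keeps $e^{-t\sqrt2}-e^{-2t}>0$, so the limiting denominator is nonzero for all $t>0$, is a point the paper's L'Hopital computation passes over silently. The only things to add for completeness are one sentence confirming that the signed coordinate permutations act transitively on $\{\pm te_i\}$ while fixing the origin, and one confirming that precomposing a weighting with an isometry of the point set again yields a weighting; both are immediate.
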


\subsubsection{Greedy set selection algorithm}

While the theorem above implies that submodularity does not hold in general for magnitude, our experiments suggest that in practice, Nemhauser's algorithm~\cite{nemhauser1978analysis} adapted to Magnitude achieves approximation rapidly. A version of this idea can be seen in Algorithm~\ref{alg:greedy}.

\begin{algorithm}[tb]\label{alg:greedy}
\caption{Greedy algorithm for the computation of original magnitude}
\label{alg:greedy}
\textbf{Input}: The set of points $S$\\
\textbf{Parameter}: Tolerance $k$\\
\textbf{Output}: The approximated total magnitude and the maximising set $S'$
\begin{algorithmic} 
\State Initialise $S'$ to be the empty set
\State Add a random element $s_1$ from $S$ to $S'$.
\While{$Mag(S') < (1-k)*Mag(S' \setminus s_i|)$ (The previous computation of magnitude is within the tolerance parameter)}
\State Find the element $s_i$ in $S \setminus S'$, maximising $Mag(S' \cup s_i)$
\State Add $s_i$ to $S'$
\EndWhile
\State \textbf{return} $S', Mag(S')$
\end{algorithmic}
\end{algorithm}

\begin{figure}
    \begin{tabular}{cc}
        \includegraphics[width=0.5\columnwidth]{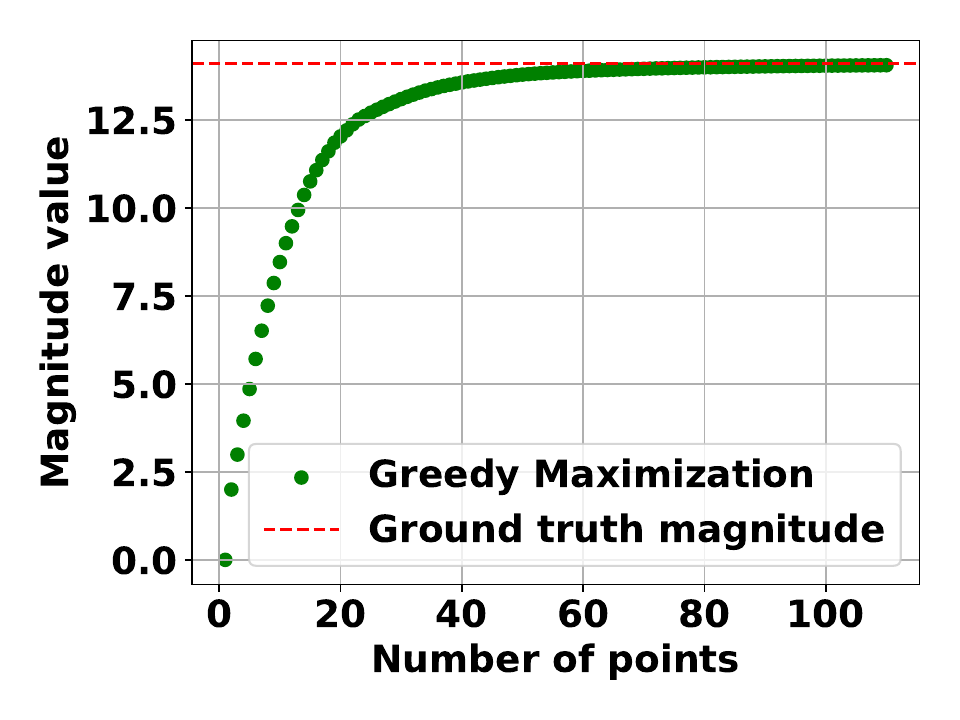} & \includegraphics[width=0.5\columnwidth]{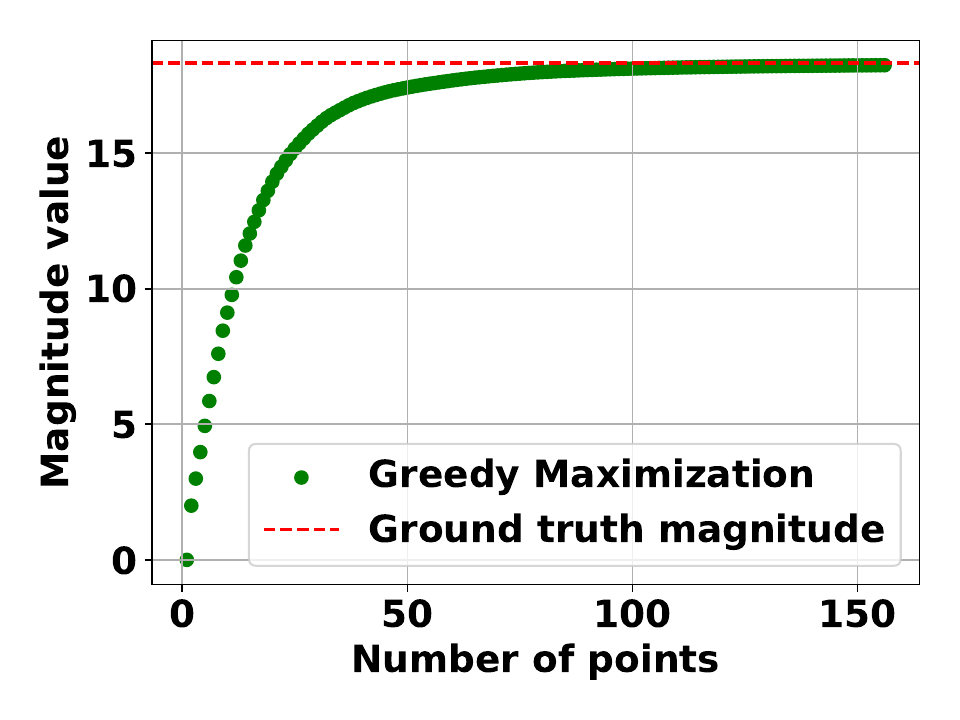} 
        \\
        (a) & (b) \\
    \end{tabular}
    
    \caption{\textbf{Greedy algorithm approximates magnitude with small number of points.} Plot (a) shows magnitude approximation of a Gaussian blobs, 3 centers, with 500 points. Plot (b) shows Gaussian blobs with 3 clusters and $10^4$ points.}
    \label{fig:magnitude_points_growth}
\end{figure}

In certrain restricted cases, submodularity can be shown: 
\begin{theorem}
\label{thm:submod-R}
$\Mag(X)$ is submodular when $X\subset\reals$. 
\end{theorem}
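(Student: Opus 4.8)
The plan is to reduce everything to the classical closed form for the magnitude of a finite subset of the line. If $X=\{x_1<x_2<\cdots<x_n\}\subset\reals$, then $e^{-|x_i-x_j|}$ is a product kernel along the order, which forces $\zeta^{-1}$ to be tridiagonal; summing its entries yields
\begin{align}\label{eq:line-formula}
    \Mag(X)=1+\sum_{i=1}^{n-1}\tanh\!\Big(\tfrac{x_{i+1}-x_i}{2}\Big)
\end{align}
(see~\cite{leinster2013magnitude}; for $n=2$ this is $1+\tanh(d/2)=2/(1+e^{-d})$, agreeing with Example~\ref{ex:2points}). Write $g(u):=\tanh(u/2)$, extended by $g(\infty):=1$, so that \eqref{eq:line-formula} says $\Mag(X)$ is $1$ plus the sum of $g$ over the consecutive gaps of $X$; set $\Mag(\emptyset):=0$.

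It is enough to verify the diminishing-returns form of submodularity --- equivalent to Definition~\ref{def:submodularity} by the standard telescoping argument over $T\setminus S$ --- namely that for every finite $V\subset\reals$, all $S\subseteq T\subseteq V$ and all $x\in V\setminus T$,
\[
    \Mag(S\cup\{x\})-\Mag(S)\ \ge\ \Mag(T\cup\{x\})-\Mag(T).
\]
For $P\subset\reals$ and $x\notin P$, let $x^{-}_P=\max\big(\{p\in P:p<x\}\cup\{-\infty\}\big)$ and $x^{+}_P=\min\big(\{p\in P:p>x\}\cup\{+\infty\}\big)$ be the neighbours of $x$ in $P$. Inserting $x$ into $P$ either splits the gap between $x^{-}_P$ and $x^{+}_P$ into two pieces, or (at an endpoint, or when $P=\emptyset$) creates one new gap; in every case \eqref{eq:line-formula} collapses the marginal gain to a single expression,
\[
    \Mag(P\cup\{x\})-\Mag(P)=h\big(x-x^{-}_P,\ x^{+}_P-x\big),\qquad h(a,b):=g(a)+g(b)-g(a+b),
\]
the $\pm\infty$ conventions taking care of the boundary cases ($h(\infty,b)=g(b)$ and $h(\infty,\infty)=1$).

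Two elementary facts then finish the proof. First, $h$ is non-decreasing in each argument on $(0,\infty]^2$: $\tanh$ is concave on $[0,\infty)$, so $g'(u)=\tfrac12\big(1-\tanh^2(u/2)\big)$ is positive and strictly decreasing, whence $\partial_a h=g'(a)-g'(a+b)\ge 0$ and likewise $\partial_b h\ge 0$, the inequalities passing continuously to the $\infty$ endpoints. Second, enlarging the set only pulls the neighbours of $x$ inward: $S\subseteq T$ gives $x^{-}_S\le x^{-}_T<x<x^{+}_T\le x^{+}_S$, so $x-x^{-}_S\ge x-x^{-}_T>0$ and $x^{+}_S-x\ge x^{+}_T-x>0$. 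Monotonicity of $h$ now gives $h(x-x^{-}_S,x^{+}_S-x)\ge h(x-x^{-}_T,x^{+}_T-x)$, which is exactly the diminishing-returns inequality.

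The real work is establishing \eqref{eq:line-formula} and keeping the boundary cases straight; once the marginal gain is packaged as the single function $h$ via the $\pm\infty$ convention, what remains is the one-line observation that consecutive gaps shrink when the set grows, together with concavity of $\tanh$. The one thing worth a careful check is the (standard) equivalence between this diminishing-returns inequality and the union/intersection form in Definition~\ref{def:submodularity}.
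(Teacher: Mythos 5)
Your proof is correct, and it rests on the same foundation as the paper's own argument: Leinster's closed form writing $\Mag(X)$ as a constant plus the sum of $\tanh\left((x_{i+1}-x_i)/2\right)$ over consecutive gaps, together with the concavity of $\tanh$ on $[0,\infty)$. Where you diverge is in which equivalent form of submodularity you verify and how the concavity is deployed. The paper checks the local exchange inequality $\Mag(X\cup\{x_1\})+\Mag(X\cup\{x_2\})\ge\Mag(X\cup\{x_1,x_2\})+\Mag(X)$, splitting into the cases where $x_1,x_2$ land in distinct gaps (where equality holds) or in the same gap, and in the latter case invoking Karamata's inequality via a majorization of gap lengths. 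You instead check diminishing returns directly: the marginal gain of inserting $x$ is $h(a,b)=g(a)+g(b)-g(a+b)$ in the two adjacent gap lengths, $h$ is monotone in each argument because $g'$ is decreasing, and enlarging the ambient set only shrinks those gaps. The two key inequalities are the same concavity fact in different clothing --- monotonicity of $h$ in its first argument is precisely the Karamata inequality for the relevant pair of two-term tuples --- but your packaging buys a somewhat cleaner argument: there is no case split on whether the two inserted points share a gap, and the $\pm\infty$ conventions absorb the endpoint and empty-set cases uniformly (the paper leaves $\Mag(\emptyset)$ implicit, which Definition~\ref{def:submodularity} technically needs when $S\cap T=\emptyset$). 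Do make explicit the standard telescoping equivalence between diminishing returns and the union/intersection form of Definition~\ref{def:submodularity}; the paper's proof likewise relies, tacitly, on the analogous equivalence with the pairwise exchange form.
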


Thus, when $X\subset \reals$, the greedy approximation of $(1 - 1/e)$ holds. 

\subsection{Discrete Center Hierarchy Algorithm}\label{sec:hierarchy}

The computational cost of the greedy algorithm arises from the need to repeatedly compute magnitude at each greedy step to examine $\Omega(n)$ points and compute magnitude each time to find the next point to add. To avoid this cost, we propose a faster approximation method. 

In addition to being monotone increasing with addition of points in $X$, the $\Mag(tX)$ also grows with $t$, and at the limit $\lim_{t\to \infty} \Mag(tX) = \#X$. (where $\#X$ is the number of points in $X$)~\cite{leinster2013magnitude}. Therefore, point sets with larger distances between the points will have larger magnitude. Thus an iterative subset selection algorithm that prefers well separated points is likely to increase the estimate faster toward the true magnitude

This effect is achieved using Algorithm~\ref{alg:hierarchy}, which creates a hierarchy of discrete centers and uses them to successively approximate magnitude. The hierarchy is constructed as a sequence $S_0, S_1, S_2,\dots$ of independent covering sets. Given a set $S$, a subset $s$ is a minimal independent covering set of radius $r$, if it satisfies the following properties: (1): for every $x \in S$, there exists $y \in s$ such that $d(x,y) \leq r$ (2): $\forall x, y\in s, d(x, y) > r$ and (3) $s$ is minimal with respect to these properties, that is removing any point from $s$ will violate the first property. With this in mind, we can construct the hierarchy as follows:

\begin{algorithm}[h]
\caption{Discrete Center Hierarchy construction}
\label{alg:hierarchy}
\begin{algorithmic} 
     \State \textbf{Input: } $(X, d)$. 
     \State $S_0 = X$
     \State $S_i \gets \emptyset$ for $i=1,2,...$
     \For{$i=1,2,...$}
         \State Select $S_i \subseteq S_{i-1}$ where $S_i$ is a minimal independent covering set of $S_{i-1}$ of radius $2^{i-1}$
    \EndFor
 \end{algorithmic}
 \end{algorithm}

 The hierarchy will have a height of at most $h=\log_2(\max_{x,y\in X} d(x,y))$, that is, log of the diameter of $X$. This hierarchy is used to successively approximate magnitude by traversing it from the top to the bottom. That is, starting from $s=\emptyset$, we first add points in $S_h$ to $s$, followed by those in $S_{h-1}, S_{h-2}$ etc, with $\Mag(s)$ increasing toward $\Mag(X)$. 

Observe that when computing Magnitude function (Definition~\ref{def:magnitude_function}) which requires computation for several values of $t$, this same sequence can be used for approximations at all the scales. Experiments described later show that a small number of points in this sequence (from the top few levels) suffices to get a good approximation of magnitude. 

 \begin{figure*}
    \begin{tabular}{cccccc}
      \includegraphics[width=0.15\textwidth]{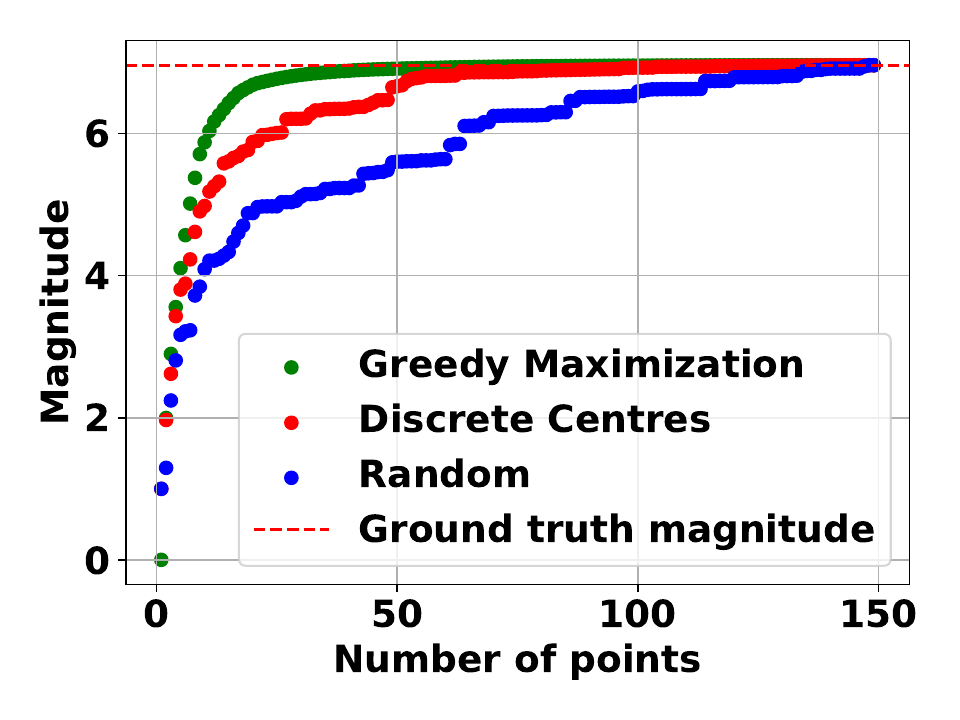}
      &
      \includegraphics[width=0.15\textwidth]{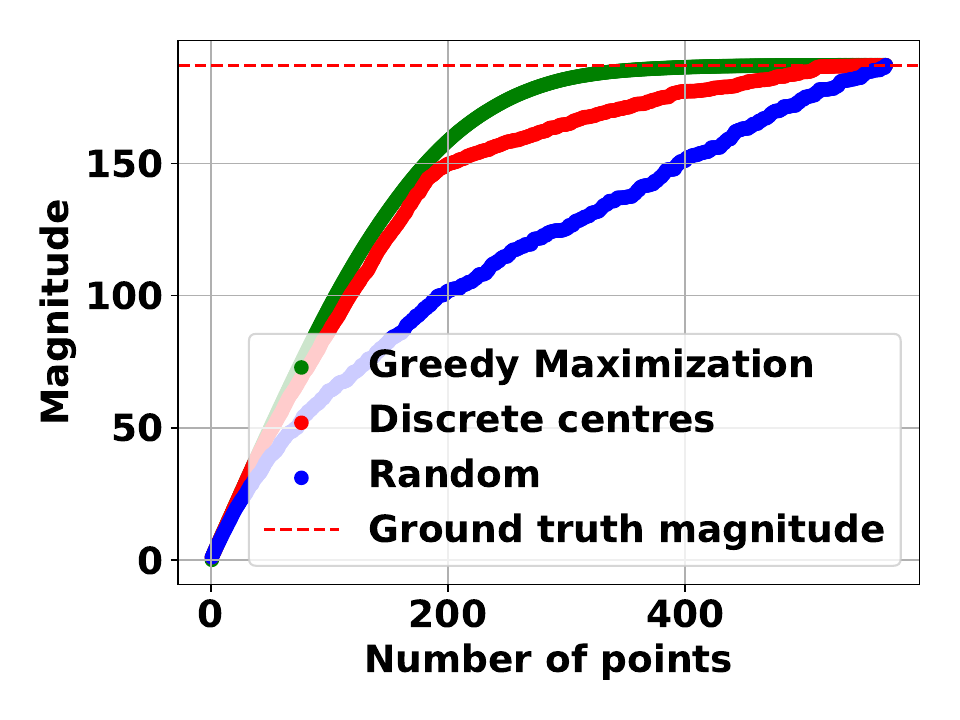}
        &
      \includegraphics[width=0.15\textwidth]{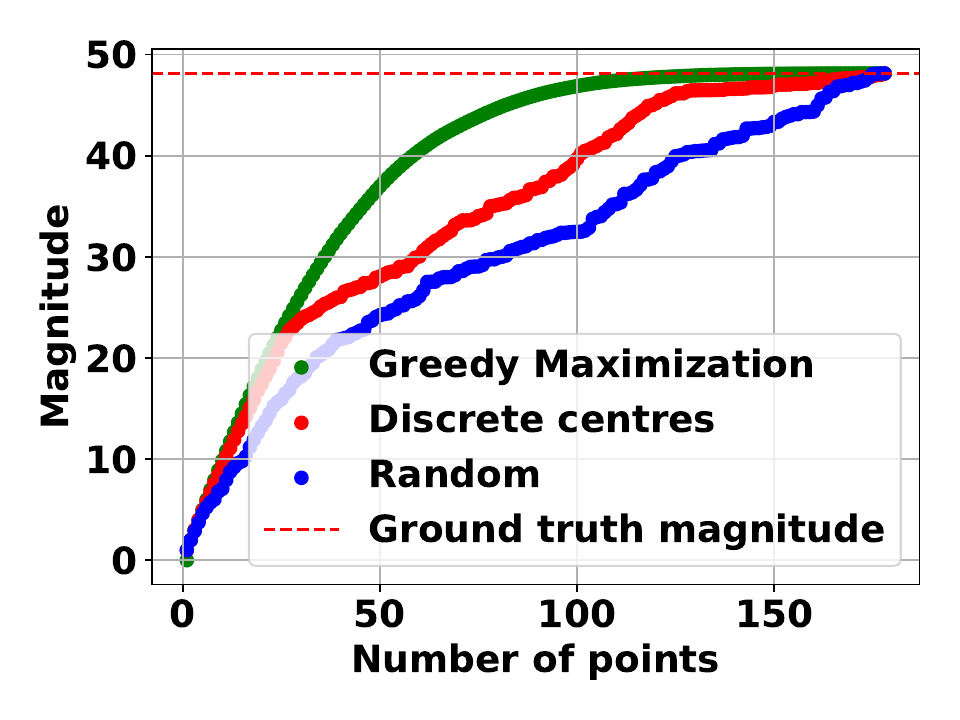}& \includegraphics[width=0.15\textwidth]{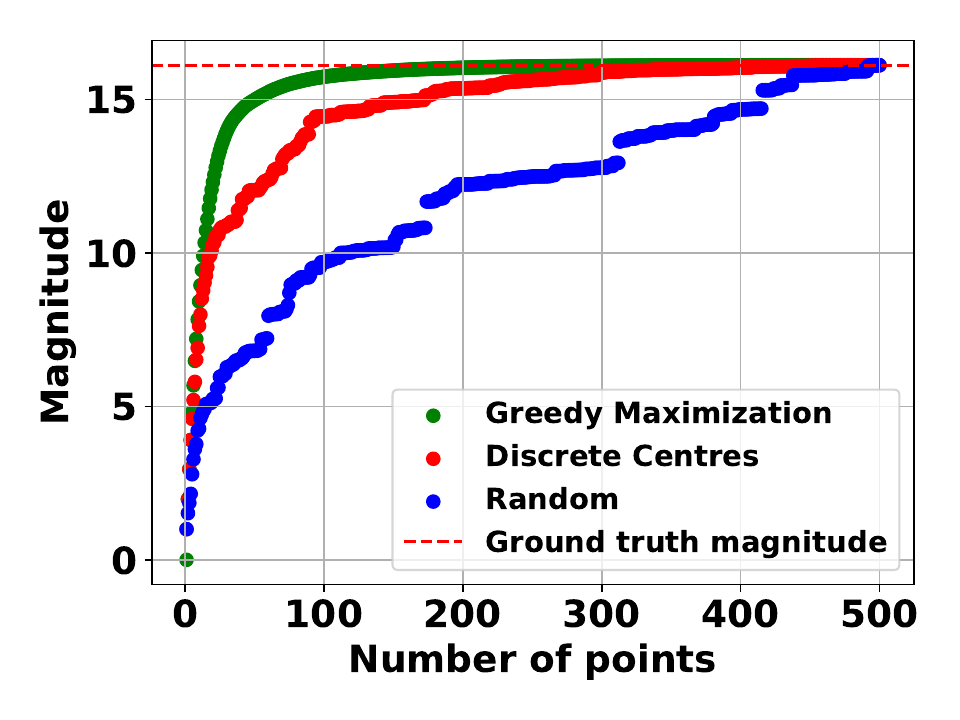}  &
        \includegraphics[width=0.15\textwidth]{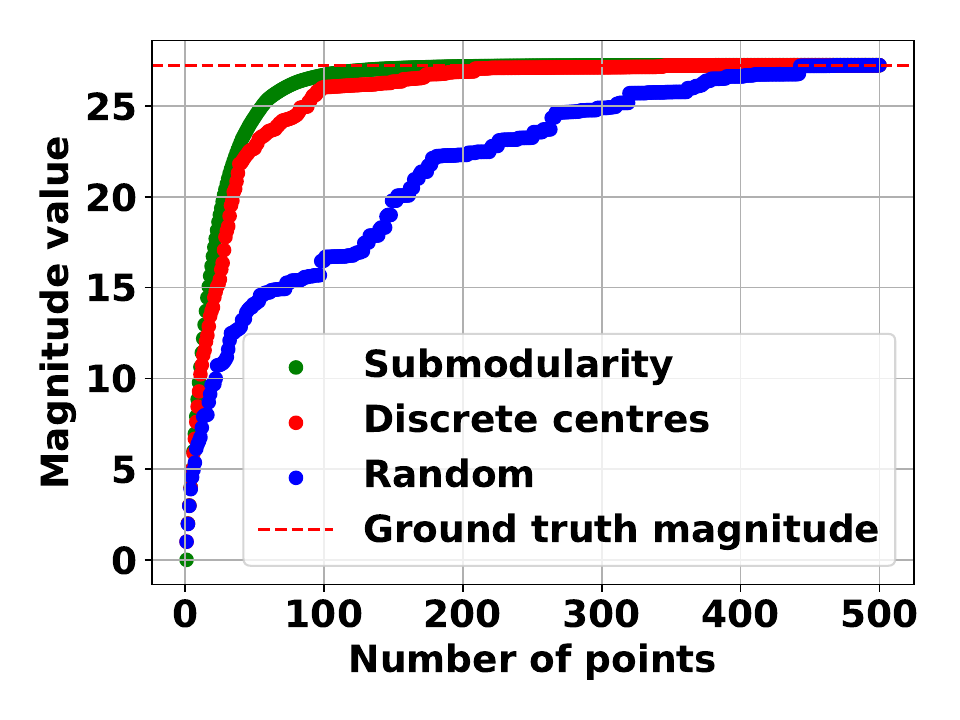} &
        \includegraphics[width=0.15\textwidth]{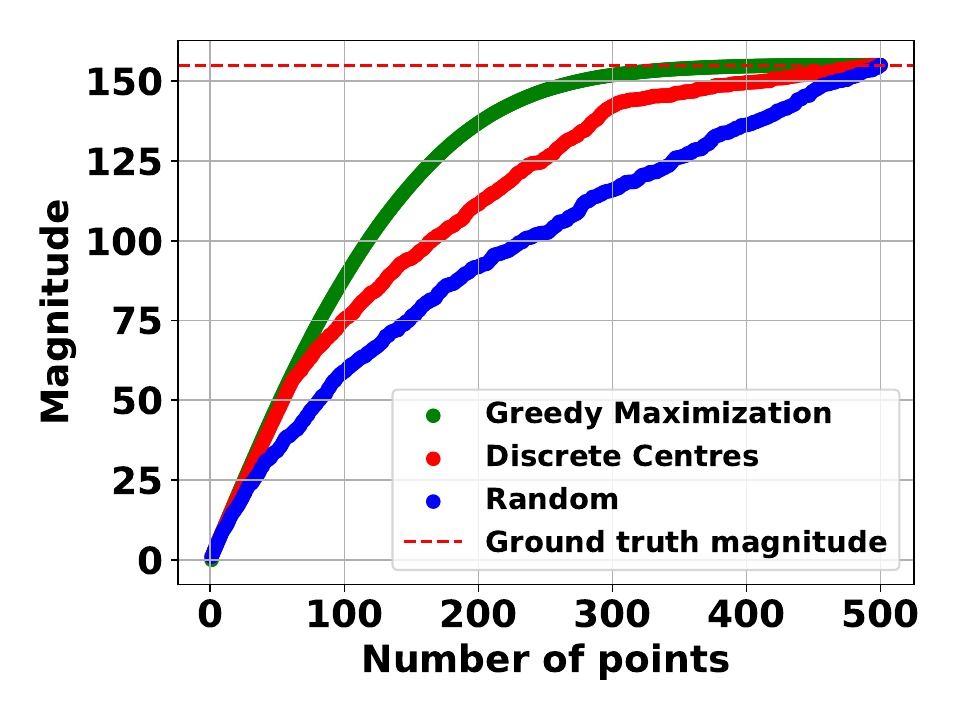}
        \\
        \\
        (a) & (b) & (c) & (d) & (e) & (f) \\
    \end{tabular}
    
    \caption{\textbf{Discrete centers are close to Greedy Maximization at a fraction of the computational cost and better than random.} In plot (a) we have the Iris dataset, in plot (b) the Breast cancer dataset, in plot (c) the Wine dataset. In the remaining plots, we see subsamples of size 500 for popular image datasets: (d)   MNIST, (e) CIFAR10 and (f) CIFAR100.}
    \label{fig:magnitude_sklearn_datasets_subsets}
\end{figure*}

\myparagraph{Incremental updates to the hierarchy.} This hierarchy can be efficiently updated to be consistent with addition or removal of points. When a new point $q$ is added, we traverse top down in the hierarchy searching for the center in $S_i$ within distance $2^{i-1}$ to $q$. When such a center does not exist, we insert $q$ to be a center in $S_j$ for all $j\leq i$. With a data structure that keeps all centers of $S_i$ within distance $c\cdot 2^{i-1}$ for some constant $c$, we could implement efficient `point location' such that insertion takes time proportional to the number of levels in the hierarchy. If a point $q$ is deleted from the hierarchy, we need to delete $q$ from bottom up. At each level $i$ if there are centers of $S_{i-1}$ within distance $2^{i-1}$ from $q$, some of them will be selectively `promoted' to $S_i$ to restore the property. For more detailed description of a similar geometric hierarchy, see~\cite{Gao2006-bq}.

\section{Applications in Machine Learning} 
\label{sec;applications}
Here we describe the use of magnitude in two novel applications: as a regularization strategy for neural networks and for clustering.

\begin{algorithm}
\caption{Magnitude Clusterer}\label{alg:mag-cluster}
\begin{algorithmic}
\State Let $X$ be a set of points (scaled so the average pairwise distance is 1) and $t \geq 0$ be some threshold.
\State Initialise $R = X \setminus \{a\}$ and $C = \{\{ a\}\}$ for some random point $a$.
    \While{$R \neq \varnothing$}
    \State Initialise \textbf{best increase} = $\infty$ and \textbf{best point} = $\varnothing$, \textbf{best cluster} = $\varnothing$.
    \For{$b \in R$, $c \in C$}
    \State Set \textbf{increase} = $\Mag(c \cup \{b\}) - \Mag(c)$
    \If{\textbf{increase} $<$ \textbf{best increase}}
    \State \textbf{best increase} = \text{increase}
    \State \textbf{best point} = b
    \State \textbf{best cluster} = c
    \EndIf
    \EndFor
    \If{\textbf{increase} $<$ t}
    \State Replace $c \in C$ with $c \ \cup \{$ \textbf{best point} $\}$.
    \Else
    \State Add \{ \textbf{best point} \} to $C$.
    \EndIf
    \State Remove \textbf{best point} from $R$.
    \EndWhile
    \State return $C$
\end{algorithmic}
\end{algorithm}

\subsection{Neural Network Regularization}
\label{sec:netrowk_regularization}

Large neural network weights can be an indicator of overfitting to noise in the training data. Methods like weight decay add a term to the model's loss function to penalise large weights. We use Magnitude of the weights as a regulariser term. If the weight parameters are given by a vector $p$, where each $p_i\in \reals$, then the magnitude of this metric space $(p, \reals)$ with the ambient metric of $\reals$ is submodular (Theorem~\ref{thm:submod-R}) with guaranteed approximation of $1-1/e$.

Specifically, we use the following algorithm to estimate magnitude. First select 1000 randomly chosen weights of the network, and then add the network weights with the smallest and largest values. As the set of the smallest and largest weights is the set of two weights with the largest possible magnitude, these points will be returned by the initial execution of the Greedy Maximization algorithm which, as magnitude is submodular on the real line, has a theoretical guarantee of performance. Then select a random subset of the remaining weights.

\subsection{Clustering}

Inspired by the greedy approximation algorithm for submodular set functions, we propose a novel magnitude-based clustering algorithm. The key idea behind this algorithm is that, given a pre-defined set of clusters, if a new point belongs to one of those clusters then its inclusion in the cluster should not cause the magnitude of the cluster to increase significantly. Thus the algorithm works as follows: In every round, the algorithm tries to find a point $b$ that is coherent with an existing cluster $c$, where coherence is measured as the change in magnitude of $c$ being below some threshold $t$ when adding $b$ to $c$. If no such point-cluster pair  can be found, then the algorithm initializes $b$ as a new cluster. The details are in Algorithm~\ref{alg:mag-cluster}. 

Good thresholds can be found by carrying out magnitude clustering over a range of threshold values and monitoring the number of clusters. The cluster counts that persist over a range of threshold values are likely to be represent natural clusterings of the data. Selecting the most persistent count is natural way to determine clustering without any other parameter.

\section{Experiments}
\label{sec:experiments}
Experiments ran on a NVIDIA 2080Ti GPU with 11GB RAM and Intel Xeon Silver 4114 CPU. We use \texttt{PyTorch}'s GPU implementation for matrix inverison. The SGD experiments used a learning rate of $0.01$ and momentum of $0.9$.

\subsection{Accuracy and computation cost comparison}

\subsubsection{Iterative algorithms}

In Figure \ref{fig:iterative_algo_comparison_random} we see a comparison of the iterative algorithms on points sampled from $\mathcal{N}(0,1)$ in  $\mathbb{R}^2$ with $10^4$ points. We observe that the Iterative Normalization algorithm is faster than Inversion and SGD in plot (a) and it only needs a few iterations (less than 20) to converge as seen in plot (b), while GD/SGD takes a longer number of iterations. In plot (c) we see the convergence performance over 100 different runs, and again we note that Iterative Normalization converges fast, while GD requires a larger number of iterations.
\begin{figure}
    \begin{tabular}{ccc}
        \includegraphics[width=0.14\textwidth]{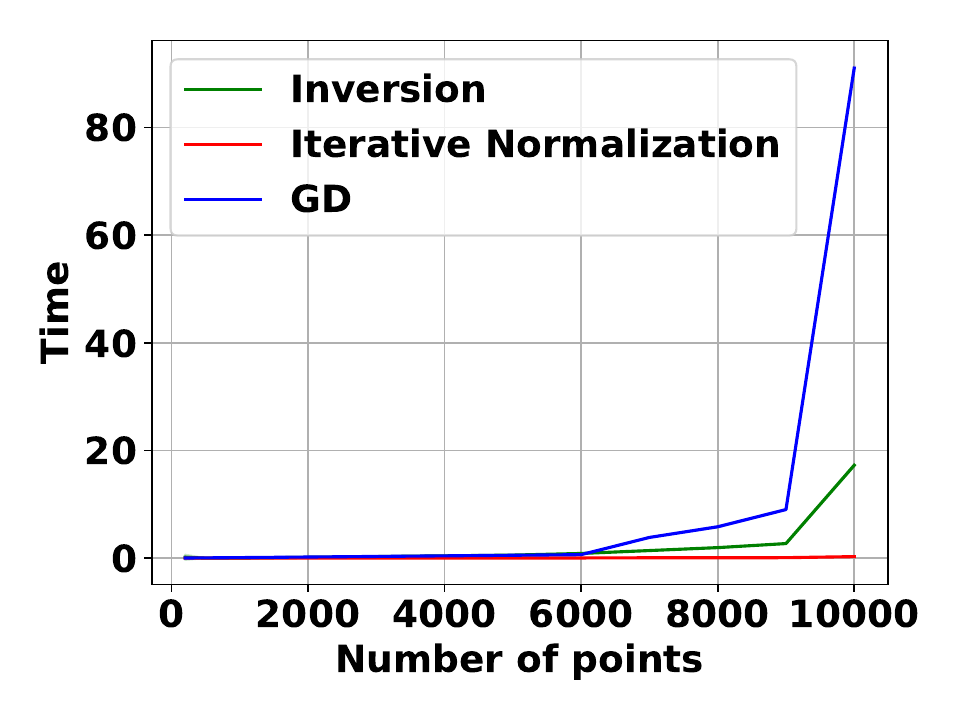} &         \includegraphics[width=0.14\textwidth]{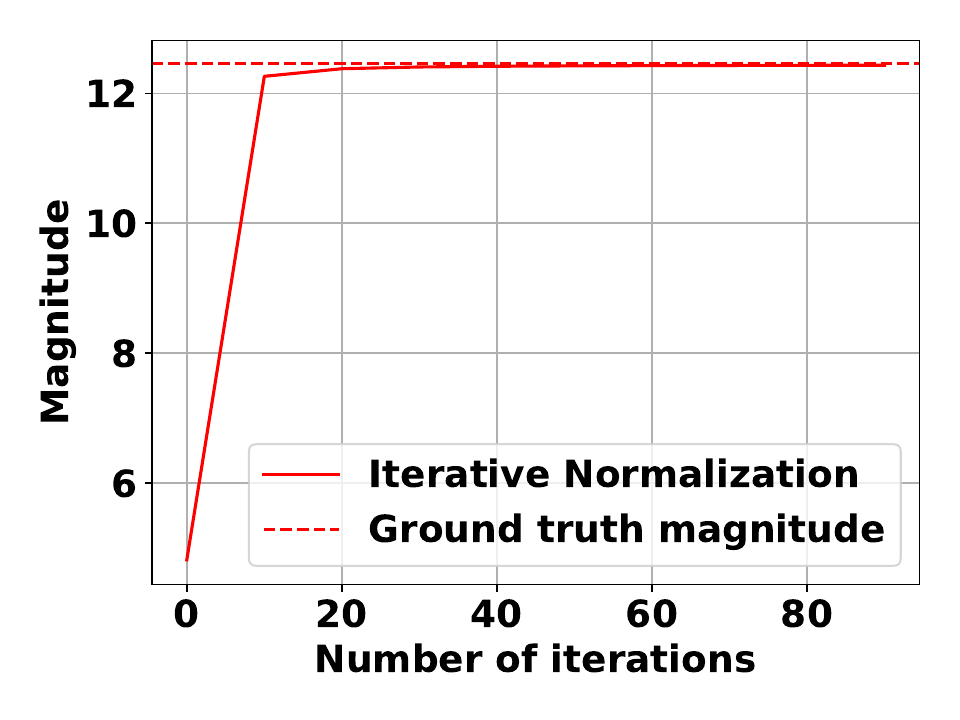} &
        \includegraphics[width=0.14\textwidth]{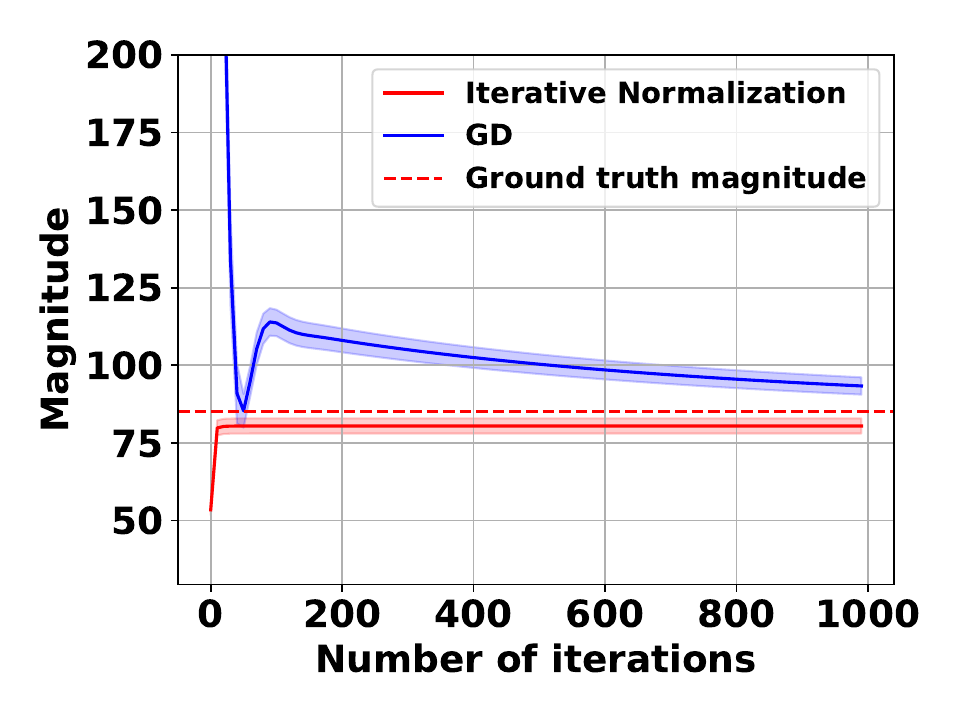} 
        \\
        (a) & (b) & (c) \\
    \end{tabular}
    \caption{\textbf{Iterative algorithms comparison} Comparison of Inversion, Iterative Normalization and GD (a) Mean and standard deviation over 10 different runs, with 50 iterations of both iterative algorithms. (b) Number of iterations for convergence of Iterative Normalization for a randomly generated sample of 10000 points. (c) Iterative Normalization vs GD. Iterative Normalization converges fast, GD takes a longer number of iterations. 100 runs. Comparison on larger point sets in supplementary materials.} 
    \label{fig:iterative_algo_comparison_random}
\end{figure}

\subsubsection{Subset selection algorithms}

Figure \ref{fig:subset_random_comparison}, shows a comparison of the subset selection algorithms on a randomly generated dataset with $10^4$ points sampled from $\mathcal{N}(0,1)$ in $\mathbb{R}^2$.

Figure \ref{fig:magnitude_sklearn_datasets_subsets} shows the performance of the subset selection algorithms for a number of \texttt{scikit-learn} datasets (Iris, Breast Cancer, Wine) and for subsamples of MNIST, CIFAR10 \cite{krizhevsky_cifar-10_2014} and CIFAR100 \cite{cifar100}. We note that the Greedy Mazimization performs the best, but Discrete Centers produces a very similar hierarchy of points. Selecting points at Random does not lead to an improvemnet in a sense that you need to approach the cardinality of the set to get a good enough approximation of magnitude.

\begin{figure}
    \begin{tabular}{ccc}
        \includegraphics[width=0.2\textwidth]{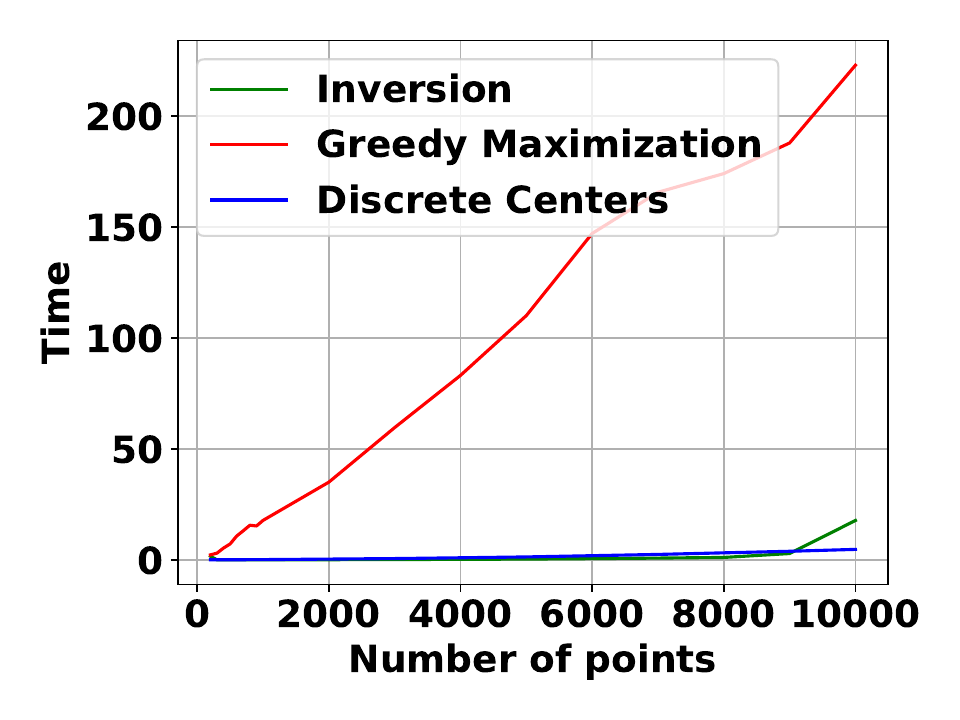} &         \includegraphics[width=0.2\textwidth]{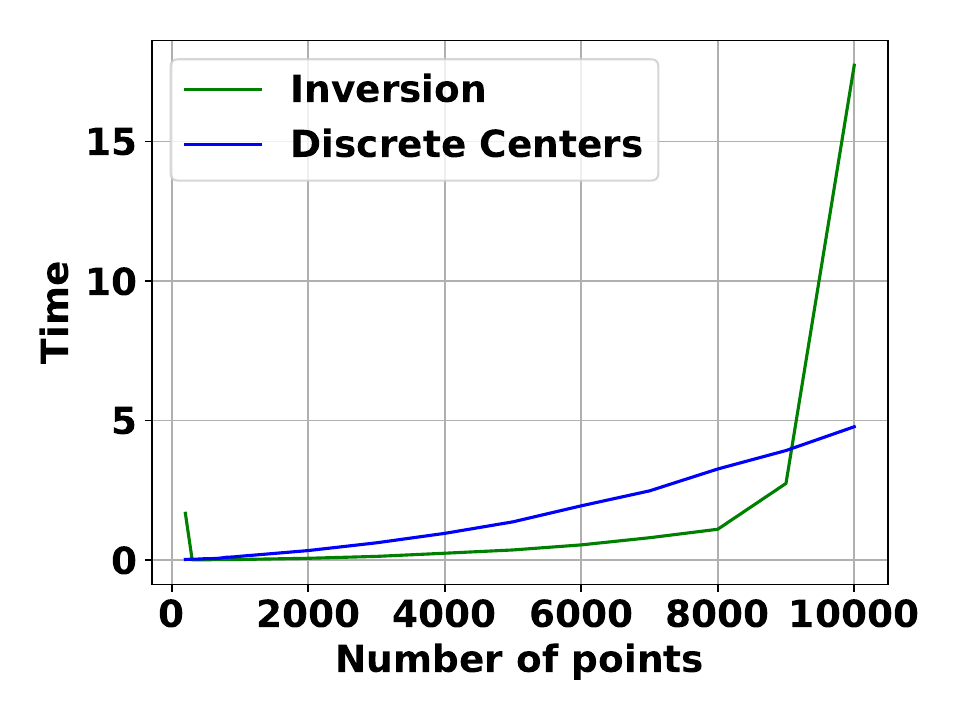} 
        \\
        (a) & (b) \\
    \end{tabular}
    \caption{\textbf{Subset selection algorithms comparison} (a) Time taken for Inversion, Greedy Maximization and Discrete Centers to execute. (b) zoom on the performance of Inversion and Discrete Centers, and note that Discrete Centers performs better as the number of points increases. Comparison on larger datasets in supplementary materials.} 
    \label{fig:subset_random_comparison}
\end{figure}

\subsection{Applications in ML}

\subsubsection{Training trajectories and generalization}
It has been shown that Magnitude and a quantity derived from Magnitude called Positive magnitude ($\mathrm{PMag})$, consisting of positive weights) are important in bounds of worst case generalization error. The method relies on computing a trajectory by taking $n$ steps of mini-batch gradient descent after convergence, and computing the Magnitude of corresponding point set on the loss landscape. See~\cite{andreeva2024topological} for details. 

The experiments up to now have been limited to training trajectories of at most size $5 \times 10^3$ due to computational limitations. Our faster approximation methods can allow us to verify the results on larger trajectories. 

We denote by  $\mathrm{Mag_n}$ and $\mathrm{PMag_n}$ the relevant quantities trjectories of length $n$. 
Size upto $5000$ have been considered in the original paper. We extend to sizes of $7000$ and $10000$. We use ViT \cite{touvron2021training} on CIFAR10, ADAM optimizer \cite{kingma2017adam}, and perform the experiment over a grid of 6 different learning rates and 6 batch sizes, where the learning rate is in the range $[10^{-5}, 10^{-3}]$, and the batch size is between $[8,256]$ resulting in 36 different experimental settings. 

The results can be found in Table \ref{table:mag-trajectories-more}, showing a number of correlation coefficients relevant for generalization \cite{gastpar2023fantastic} between generalization gap and $\mathrm{Mag_n}$ and $\mathrm{PMag_n}$ for $n=\{5000, 7000, 10000\}$. We use the granulated Kendall's coefficients ($\boldsymbol{\psi}_{\text{lr}}$ and $\boldsymbol{\psi}_{\text{bs}}$ are the granulated Kendall coefficient for the learning rate and for batch size respectively, and $\boldsymbol{\Psi}$ is the Average Kendall coefficient, which is the average of $\boldsymbol{\psi}_{\text{lr}}$ and $\boldsymbol{\psi}_{\text{bs}}$ \cite{gastpar2023fantastic}), which are more relevant than the classical Kendall's coefficient for capturing causal relationships.

We observe that all correlation coefficients improve with the increase of trajectory size. In particular, the Kendall tau coefficient and the Average Granulated Kendall coefficient increases by $0.14$ for $\mathrm{Mag_{10000}}$ compared to $\mathrm{Mag_{5000}}$, and by $0.09$ for $\mathrm{PMag_{10000}}$. Similarly, Kendall tau improves by $0.10$ for $\mathrm{Mag_{10000}}$ and $0.05$ for $\mathrm{PMag_{10000}}$. This is an interesting result which needs to be investigated further for more models and datasets. Further visualisation results can be seen in Figure \ref{fig:magnitude_traj_ours}, where we see how the proposed quantities change with the generalization gap, and when more trajectories are considered.

\begin{table}
\centering
    \begin{tabular}{|l|l|l|l|l|}
    \hline
        Metric & {\footnotesize$\boldsymbol{\psi}_{\text{lr}}$} & {\footnotesize$\boldsymbol{\psi}_{\text{bs}}$} & {$\boldsymbol{\Psi}$ } &  {$\tau$} \\
         \hline
         $\mathrm{Mag_{5000}}$ 
         & 0.68 & 0.62 & 0.65 & 0.64 \\
         $\mathrm{Mag_{7000}}$ & 0.71 & 0.77 & 0.74 & 0.69 \\
         $\mathrm{Mag_{10000}}$ & \textbf{0.75} & \textbf{0.82} & \textbf{0.79} & \textbf{0.74} \\
         \hline
         $\mathrm{PMag_{5000}}$ 
         & 0.91 & 0.67 & 0.79 & 0.85 \\
         $\mathrm{PMag_{7000}}$ & 0.93 & 0.73 & 0.83 & 0.88 \\
         $\mathrm{PMag_{10000}}$ & \textbf{0.97} & \textbf{0.79} & \textbf{0.88} & \textbf{0.90} \\
         \hline
    \end{tabular}
    \caption{Generalization gap correlation improvement using an increasing number of points.  $\tau$ is Kendall tau.}
    \label{table:mag-trajectories-more}
\end{table}

\begin{figure*}
    \begin{tabular}{cccc}
        \includegraphics[width=0.25\textwidth]{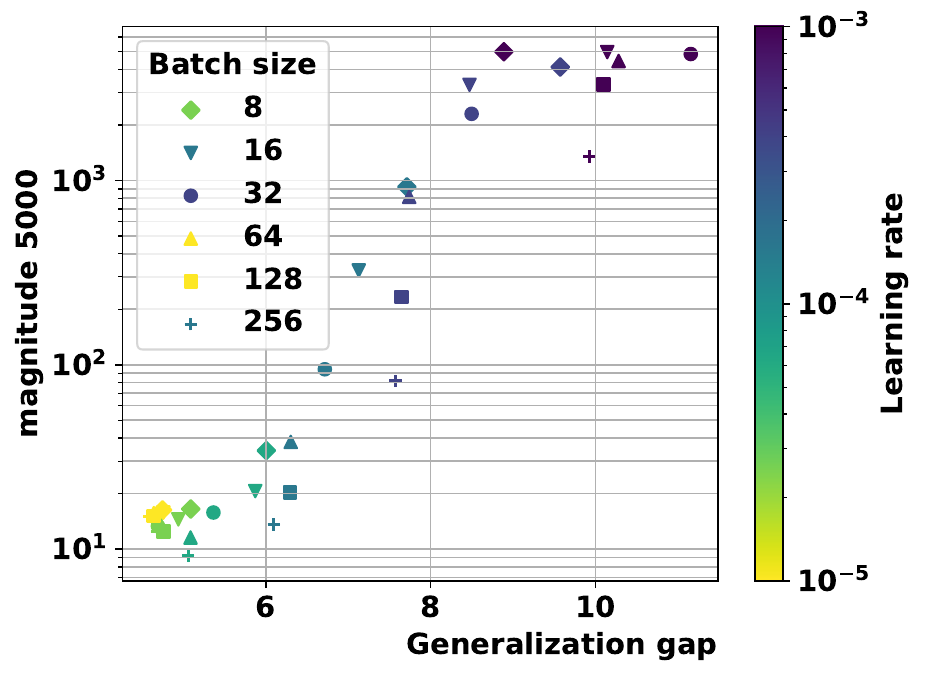} & \includegraphics[width=0.25\textwidth]{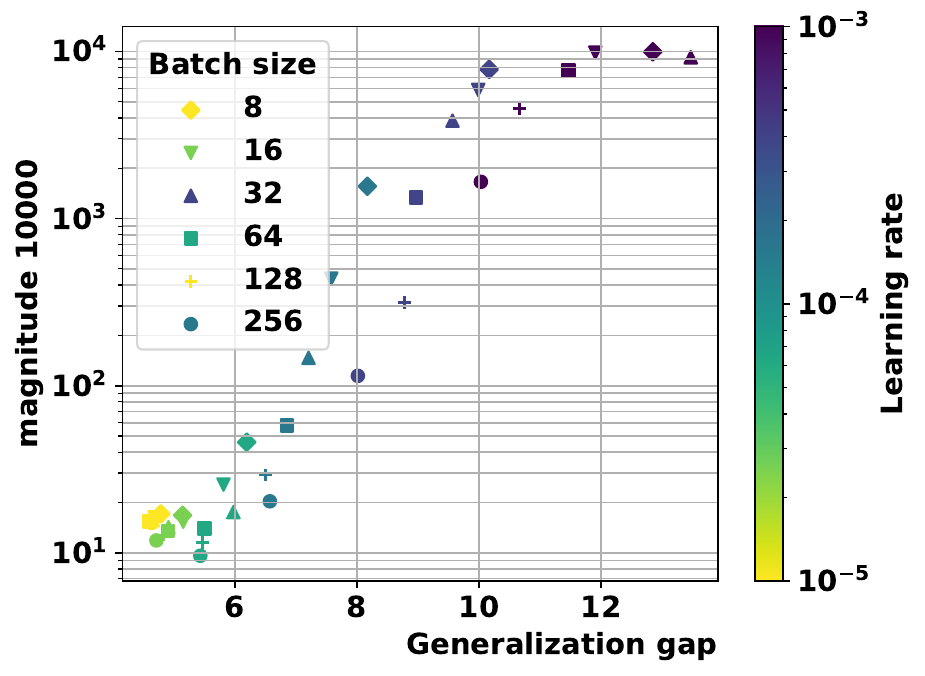} &
        \includegraphics[width=0.25\textwidth]{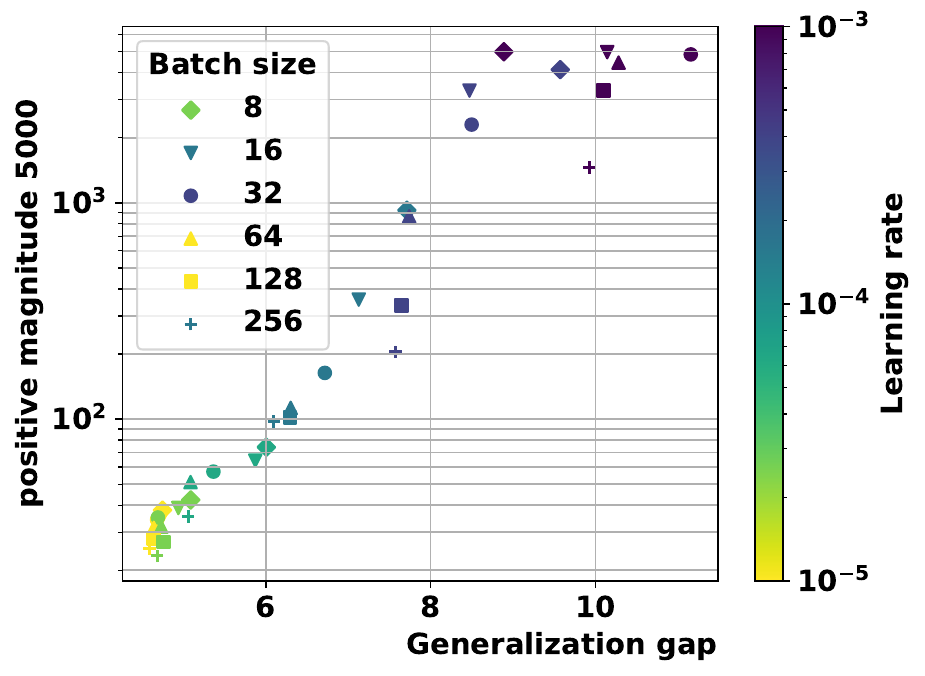} &
        \includegraphics[width=0.25\textwidth]{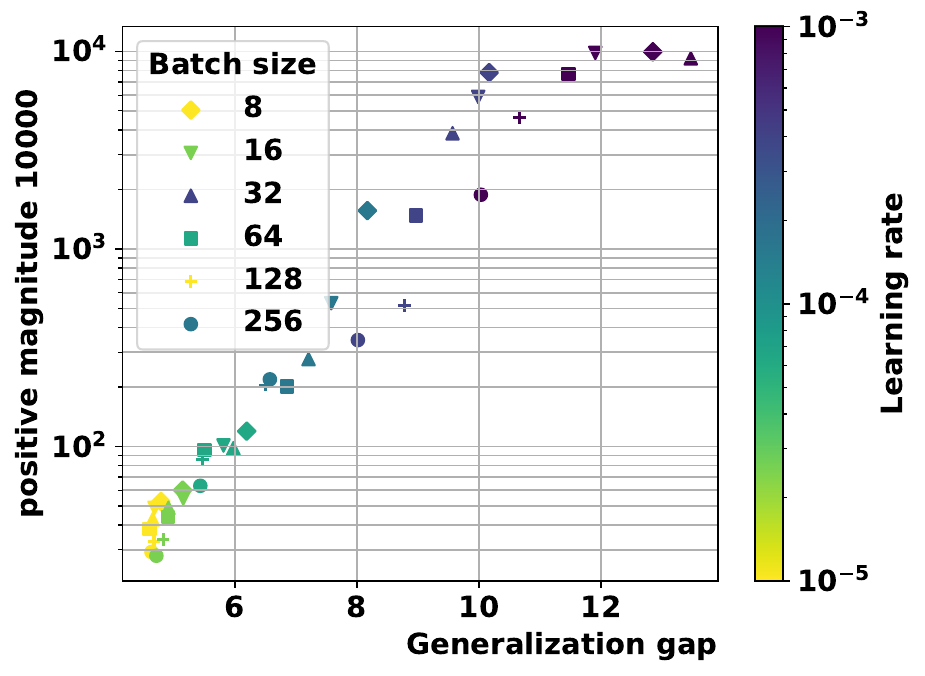}
        \\
        (a) & (b) & (c) & (d)\\
    \end{tabular}
    
    \caption{\textbf{Extended complexity measures vs. the generalization gap} 
    We compare the original topological ocmplexity measure $\mathrm{Mag_{5000}}$ (a) and $\mathrm{PMag_{5000}}$ (c) with the extended complexity measures $\mathrm{Mag_{10000}}$ (b) and $\mathrm{PMag_{10000}}$ (d)  for a ViT trained on CIFAR10. }
    \label{fig:magnitude_traj_ours}
\end{figure*}

\subsubsection{Neural Network Regularization}

Utilising the magnitude approximation described in Section \ref{sec:netrowk_regularization}, we train five neural networks each with two fully connected hidden layers on the MNIST dataset for 2000 epochs, using cross entropy loss on MNIST. We train the models with a scalar multiple of the magnitude of the weights as a penalty term. One of the networks we train (with a regularization constant of 0) corresponds to an unregularised model.
We then evaluate the differences in magnitude as well as train and test loss for each model.

Our results are shown in Table \ref{table:mag-regularisation}. We first observe that as expected, adding a magnitude-based penalty term causes the network's magnitude to decrease. More interestingly magnitude regularization causes the neural network to perform better. This increase in performance occurs both in terms of test loss and generalization error, with the unregularised model recording both the largest test loss and generalization gap. It is also interesting to note that the generalization appears to increase consistently with the strength of regularization, whereas test loss appears to have an optimal strength of regularization at $\lambda = 0.5$.

\begin{table}
\centering
    \begin{tabular}{|l|l|l|l|l|}
    \hline
         $\lambda$ & Train. Loss & Test Loss & Gap & Magnitude \\
         \hline
         0 & 0.0021 & 0.0757 & 0.0736 & 1.5810 \\
         0.1 & 0.0041 & 0.0641 & 0.0600 & 1.1567 \\
         0.2 & 0.0061 & 0.0607 & 0.0546 & 1.1293 \\
         0.5 & 0.0103 & 0.0602 & 0.0499 & 1.0842 \\
         1 & 0.0167 & 0.0631 & 0.0464 & 1.0668 \\
         \hline
    \end{tabular}
    \caption{Magnitude and performance of Neural Networks after training to minimise $\text{Training Loss}(\text{weights}) + \lambda \Mag(\text{weights})$. }
    \label{table:mag-regularisation}
\end{table}

\subsubsection{Clustering}

The results of using the Clustering algorithms described in the previous section are presented in Figure \ref{fig:clustering_results}. We note that our algorithm performs satisfactory, providing clustering better than Agglomerative, $k$-means and DBSCAN. 

\begin{figure}[h]
    \centering
    \includegraphics[width=0.85\linewidth]{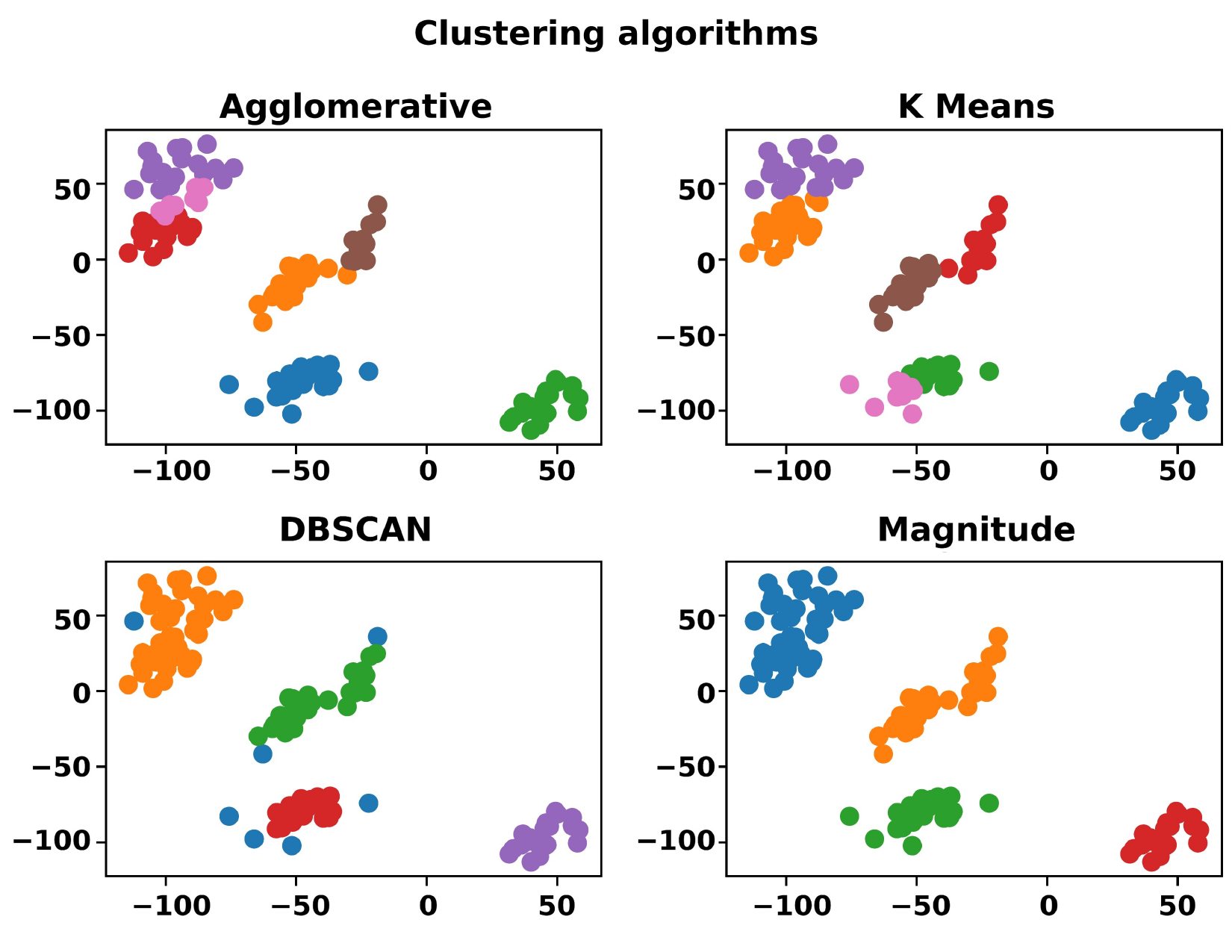}
    \caption{Results of applying the Magnitude clustering algorithm to an artificial dataset. We can see that the magnitude-based algorithm manages to find natural clusters and is able to determine a suitable number of clusters, whereas K-means and hierarchical clustering (Ward clustering in \texttt{scikit-learn}) need the user to determine this. A main difference between DBSCAN and the magnitude algorithm is in the treatment of outliers.}
    \label{fig:clustering_results}
\end{figure}

\section{Related work}
\label{sec:related}

The literature relevant to magnitude and machine learning has already been discussed in previous sections. We have studied closely the relation of magnitude to generalization~\cite{andreeva2024topological,andreeva_metric_2023}. In other works diversity of latent representations have been recently explored in~\cite{limbeck2024metric}. Magnitude based clustering has been suggested in \cite{o2023magnitude}. The algorithms proposed make use of a similar quantity called alpha magnitude \cite{o2023alpha}, but unlike ours, requires multiple parameters as input. 

In the mathematical literature, we observe that recent developments such as Magnitude for graphs~\cite{leinster2019magnitude} and relation between Magnitude and entropy~\cite{chen2023categorical} are likely to be of interest in machine learning. As is the interpretation of magnitude as dual of Reproducing Kernel Hilbert Space~\cite{meckes2015magnitude}.

The computational problem can be seen as solving the linear system $\zeta w=\mathbb{1}$, with $\mathbb{1}$ as a vector of all $1$. Notice that our matrix $\zeta$ is symmetric positive definite but also a dense matrix. The iterative normalization algorithm we proposed bears resemblance to basis pursuit and other algorithms in compressive sensing~\cite{foucart2013mathematical}, but the relation is not yet clear.

\section{Conclusion}
In this paper, we introduced fast and scalable methods for approximating metric magnitude. We provided novel applications to deep learning and clustering, and extended existing ones, leading to better overall performance. We expect that this work will enable the wider use of this versatile geometric concept in machine learning and optimization.
\bibliography{aaai25}

\newpage

\pagebreak 
\newpage 

\appendix

\section{Appendix}\label{app:1}

\subsection{Proofs of theorems}

\myparagraph{Proof of Theorem~\ref{thm:non-submodularity}}

\begin{proof}

We note that $X$ is a homogeneous metric space, thus using Proposition 2.1.5 from \cite{leinster2013magnitude} we can calculate 

    \[ Mag(X) = \frac{2D}{1+e^{-2t}+2(D-1)e^{-t\sqrt{2}}} \]

    Next, we consider the similarity matrix of $X \cup \{0\}$ to calculate its magnitude. 

    \[ \zeta_{X \cup \{0\}} = \begin{pmatrix}
        A_1 & A_2 \\
        A_3 & A_4
    \end{pmatrix} \]

    where $A_1$ is the similarity matrix for $X$,

    \[ A_2 = \begin{pmatrix}
        e^{-t} \\
        e^{-t} \\
        \vdots \\
        e^{-t} 
    \end{pmatrix} \quad A_3 = A_2^T \quad A_4 = \begin{pmatrix}
        1
    \end{pmatrix} .\]

    We will use the inverse formula:

    \[ \begin{pmatrix}
        A_1 & A_2 \\
        A_3 & A_4
    \end{pmatrix}^{-1} \] 
    
    \[ = \begin{pmatrix}
        A_1^{-1} + A_1^{-1}A_2B^{-1}A_3A_1^{-1} & -A_1^{-1}A_2B^{-1} \\
        -B^{-1}A_3A_1^{-1} & B^{-1}
    \end{pmatrix} \]

    where $B = A_4-A_3A_1^{-1}A_2$

    As $X$ is homogeneous, we observe that the sum of rows of $A_1^{-1}$ give $\frac{\Mag(X)}{D}$, so each entry of $A_1^{-1}B$ is $e^{-1}\frac{\Mag(X)}{D}$, and by symmetry the same is true for $CA^{-1}$. Thus $B = \frac{1}{1-e^{2}\Mag(X)}$. It then follows that the magnitude of $|X \cup \{0\}|$ is 

    \[ \frac{(1-2e^{-t})\Mag(X)+1}{1-e^{-2t}\Mag(X)}. \] 

    When this expression is expanded, L'Hopital's rule then gives that

    \[ \lim_{D \to \infty} \Mag(X \cup \{0\}) - \Mag(X) = \frac{8(e^{2t} - e^{t(1+\sqrt{2})})^2}{8(e^{4t}-e^{t(2+\sqrt{2})})} = \frac{(e^t-e^{t\sqrt{2}})^2}{e^{2t}-e^{t\sqrt{2}}}. \]

\end{proof}

\begin{definition}
    Let $x,y \in \mathbb{R}^n$ be such that $x_1 \geq x_2 \geq ... \geq x_n$, $y_1 \geq y_2 \geq ... \geq y_n$ and $\sum_{i=1}^n x_i = \sum_{i=1}^n y_i$. Then we say that $x$ majorises $y$ if for all $k = 1,...,n$

    \[ \sum_{i=1}^k x_i \geq \sum_{i=1}^k y_i. \]
\end{definition}

\begin{theorem}[Karamata's inequality]
    Let $I$ be an interval on $\mathbb{R}$ and let $f:I \to \mathbb{R}$ be concave. If $x_1,...,x_n$ and $y_1,...,y_n$ are numbers in $I$ such that $(x_1,...,x_n)$ majorises $y_1,...,y_n$ then

    \[ f(x_1) + ... + f(x_n) \leq f(y_1) + ... + f(y_n). \]
\end{theorem}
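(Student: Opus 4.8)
The plan is to prove Karamata's inequality by the classical argument combining supporting lines with summation by parts (Abel summation). First I would use concavity of $f$ in the following form: for every $b$ in the interior of $I$ there is a slope $c$ (any value between the right and left derivatives of $f$ at $b$) such that $f(a) \le f(b) + c(a - b)$ for all $a \in I$. Applying this at each $b = y_i$ produces constants $c_1,\dots,c_n$ with
\[ f(x_i) - f(y_i) \le c_i (x_i - y_i), \qquad i = 1,\dots,n. \]
The extra property I need is that the $c_i$ can be taken non-decreasing, $c_1 \le c_2 \le \dots \le c_n$. This follows because the derivative of a concave function is non-increasing in its argument while the $y_i$ are non-increasing: so $y_i \ge y_{i+1}$ forces $c_i \le c_{i+1}$ whenever $y_i > y_{i+1}$, and when $y_i = y_{i+1}$ I simply choose $c_i = c_{i+1}$.

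Next I would sum the $n$ inequalities and transform the right-hand side. Writing $d_i = x_i - y_i$ and $D_k = \sum_{i=1}^k d_i$, the majorization hypothesis says precisely that $D_k \ge 0$ for every $k$, with $D_n = 0$ since $\sum_i x_i = \sum_i y_i$. Abel summation then gives
\[ \sum_{i=1}^n c_i d_i = \sum_{i=1}^{n-1} (c_i - c_{i+1}) D_i + c_n D_n = \sum_{i=1}^{n-1} (c_i - c_{i+1}) D_i . \]
Now each factor $c_i - c_{i+1}$ is $\le 0$ by monotonicity of the slopes and each $D_i$ is $\ge 0$ by majorization, so the right-hand side is $\le 0$. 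Chaining this with the summed supporting-line inequalities gives $\sum_i f(x_i) - \sum_i f(y_i) \le \sum_i c_i d_i \le 0$, which is exactly the claim.

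I expect the only subtle point to be the bookkeeping around the slopes $c_i$ when the $y_i$ are not all distinct and when $f$ is merely concave rather than differentiable --- both handled by passing to one-sided derivatives (or the superdifferential) and invoking the standard fact that the right derivative of a concave function is finite and non-increasing on the interior of $I$. If one only needs the differentiable case, which is all that is required for the application of this lemma to $\Mag$ on $\reals$, one simply sets $c_i = f'(y_i)$ and the monotonicity is immediate; the remainder of the argument --- the supporting-line bound, the Abel identity, and the sign check --- is entirely routine.
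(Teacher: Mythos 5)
Your proof is correct: it is the classical supporting-line-plus-Abel-summation argument for Karamata's inequality, with the slopes $c_i$ chosen from the superdifferential at each $y_i$ so that they are non-decreasing, and the sign check $\sum_{i=1}^{n-1}(c_i - c_{i+1})D_i \le 0$ carried out properly. Note that the paper itself states this theorem without proof, invoking it only as a classical tool in the proof of Theorem~\ref{thm:submod-R}, so there is no in-paper argument to compare against; your version supplies the missing details. The only loose end is the one you already flag --- supergradients may fail to be finite at an endpoint of $I$ --- and since the paper's application is to the smooth function $\tanh$ on $[0,\infty)$ with all relevant points where the derivative is finite, the differentiable case you describe at the end fully covers what is needed.
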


\myparagraph{Proof of Theorem~\ref{thm:submod-R}}
\begin{proof}
Given a set $X$, we can write $B = \{b_1 < ... < b_n\} = X \cup \{-\infty, \infty\}.$ The formula provided in corollary 2.3.4 from \cite{leinster2013magnitude} gives that

\[ \Mag(X) = \sum_{i=1}^n \tanh\left(\frac{b_{i+1} - b_i}{2}\right) - 1. \]

Note that here we define $\tanh{\infty} = 1$.

    Then given the points $x_1 < x_2$ such that $x_1 \in (b_j, b_{j+1})$ and $x_2 \in (b_k, b_{k+1})$, we calculate that

\[ \Mag(X \cup \{x_1\}) \]
\[ = \sum_{i=1}^n \tanh\left(\frac{b_{i+1} - b_i}{2}\right) - 1 - \tanh\left(\frac{b_{j+1} - b_j}{2}\right)\]
    \[ + \tanh\left(\frac{b_{j+1} - x_1}{2}\right) + \tanh\left(\frac{x_1 - b_j}{2}\right).\]

    \[ \Mag(X \cup \{x_2\}) \]
    \[= \sum_{i=1}^n \tanh\left(\frac{b_{i+1} - b_i}{2}\right) - 1 - \tanh\left(\frac{b_{k+1} - b_k}{2}\right)\]
    \[ + \tanh\left(\frac{b_{k+1} - x_2}{2}\right) + \tanh\left(\frac{x_2 - b_k}{2}\right).\]

    If $j \neq k$ then

    \[ \Mag(X \cup \{x_1, x_2\}) = \sum_{i=1}^n \tanh\left(\frac{b_{i+1} - b_i}{2}\right) - 1 \]
    \[ - \tanh\left(\frac{b_{j+1} - b_j}{2}\right) + \tanh\left(\frac{b_{j+1} - x_1}{2}\right) + \tanh\left(\frac{x_1 - b_j}{2}\right) \]
    \[ - \tanh\left(\frac{b_{k+1} - b_k}{2}\right) + \tanh\left(\frac{b_{k+1} - x_2}{2}\right) + \tanh\left(\frac{x_2 - b_k}{2}\right). \]

    It follows that 
    \[ \Mag(X \cup \{x_1\}) + \Mag(X \cup \{x_2\}) \]
    \[ = \Mag(X \cup \{x_1, x_2\}) + \Mag(X).\]

    If $j = k$, then 

    \[ |X \cup \{x_1, x_2\}| = \sum_{i=1}^n \tanh\left(\frac{b_{i+1} - b_i}{2}\right) - 1 \]
    \[  - \tanh\left(\frac{b_{j+1} - b_j}{2}\right) + \tanh\left(\frac{b_{j+1} - x_2}{2}\right) \]
    
    \[ + \tanh\left(\frac{x_2 - x_1}{2}\right) + \tanh\left(\frac{x_1 - b_j}{2}\right). \]

    So

    \[ \Mag(X \cup \{x_1\}) + \Mag(X \cup \{x_2\}) - \Mag(X \cup \{x_1, x_2\}) - \Mag(X) \]
    \[ = \tanh\left(\frac{b_{j+1} - x_1}{2}\right) + \tanh\left(\frac{x_2 - b_j}{2}\right) - \]
    
    \[ \tanh\left(\frac{b_{j+1} - b_j}{2}\right) - \tanh\left(\frac{x_2 - x_1}{2}\right).  \]

    We observe that $\tanh(x)$ is concave on $x \geq 0$ and that since $b_{j+1} - b_j \geq b_{j+1} - x_1$ and $b_{j+1} - b_j \geq x_2 - b_j$ and $b_{j+1} - b_j + x_2 - x_1 = b_{j+1} - x_1 + x_2 - b_j$, $(x_2-x_1, b_{j+1} - b_j)$ majorises $(b_{j+1} - x_1, x_2 - b_j)$. Thus by Karamata's inequality,

    \[ \tanh\left(\frac{b_{j+1} - x_1}{2}\right) + \tanh\left(\frac{x_2 - b_j}{2}\right) - \]
    
    \[ \tanh\left(\frac{b_{j+1} - b_j}{2}\right) - \tanh\left(\frac{x_2 - x_1}{2}\right) \geq 0. \]

    Thus it follows that 

    \[ \Mag(X \cup \{x_1\}) + \Mag(X \cup \{x_2\}) \geq \Mag(X \cup \{x_1, x_2\}) + \Mag(X). \]

    Hence magnitude on $X$ is submodular.
\end{proof}

\begin{theorem}
    \label{thm:3pts}
    Every set of 3 points with the magnitude function is submodular.
    \end{theorem}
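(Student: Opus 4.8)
The plan is to peel the submodularity condition on the eight-element lattice $2^{\{1,2,3\}}$ down to a single scalar inequality, and then settle that inequality with a short computation using the explicit $3\times 3$ magnitude formula. Writing $f(S)=\Mag(S)$ (so $f(\emptyset)=0$ and $f(\{i\})=1$), the diminishing-returns form of submodularity over a three-element ground set has only two kinds of instances: those with $|A|=0$, which read $\Mag(\{i,j\})\le 2$ and hold trivially because a two-point magnitude equals $\frac{2}{1+e^{-d}}<2$; and those with $|A|=1$, which read $\Mag(\{k,i\})+\Mag(\{k,j\})\ge \Mag(X)+1$ for each vertex $k$, where $\{i,j\}=X\setminus\{k\}$. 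Since the right-hand side $\Mag(X)+1$ is the same for every $k$, it suffices to check the instance whose left-hand side is smallest, and because a two-point magnitude increases with distance that instance keeps the two shortest edges. So after relabeling with $d_{12}\le d_{13}\le d_{23}$, the whole theorem reduces to
\[
\Mag(\{1,2\})+\Mag(\{1,3\})\ \ge\ \Mag(\{1,2,3\})+1 .
\]

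To make this concrete, put $x=e^{-d_{12}}$, $y=e^{-d_{13}}$, $z=e^{-d_{23}}$ and $a=1+x$, $b=1+y$, $c=1+z$, so that $\Mag(\{1,2\})=\frac2a$ and $\Mag(\{1,3\})=\frac2b$. Applying the matrix-determinant lemma to $\det(\zeta+\mathbb{1}\mathbb{1}^T)$ gives $\Mag(X)+1=\frac{2\,(4-a^2-b^2-c^2+abc)}{\det\zeta}$, where $\det\zeta=1-x^2-y^2-z^2+2xyz>0$ since $\zeta$ is positive definite for point sets. Clearing this positive denominator turns the target into the polynomial inequality
\[
f(a,b,c):=(a+b)\det\zeta-ab\,(4-a^2-b^2-c^2+abc)\ \ge\ 0
\]
on the region $1<c\le b\le a\le 2$ with the single active triangle-inequality constraint $c\ge (a-1)(b-1)+1$ (equivalently $z\ge xy$).

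The key observation is that the coefficient of $c^2$ in $f$ equals $ab-a-b=(a-1)(b-1)-1\le 0$, so for fixed $a,b$ the map $c\mapsto f(a,b,c)$ is concave and hence attains its minimum over $c\in[(a-1)(b-1)+1,\ b]$ at one of the two endpoints. At the left endpoint $z=xy$ the three points are collinear with vertex $1$ lying between $2$ and $3$, so the inequality is exactly the submodularity relation $f(\{1,2\})+f(\{1,3\})\ge f(\{1,2,3\})+f(\{1\})$ for a three-point subset of $\reals$, which is Theorem~\ref{thm:submod-R}. At the right endpoint $c=b$, dividing out the known linear factors yields $f(a,b,b)=(b-1)(a-2)^2\bigl(a-b(b-1)\bigr)$, and each of the three factors is nonnegative on $1\le b\le a\le 2$ (the last one because $a\ge b\ge b(b-1)$, which is just $b\le 2$). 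This gives $f\ge 0$ throughout, hence submodularity; and since the argument uses nothing about the distances beyond the triangle inequality, it applies verbatim after rescaling, so $\Mag(tX)$ is submodular for every $t>0$ as well.

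The main obstacle I anticipate is the bookkeeping in the two reductions: first checking that the only nontrivial submodularity instances are the three ``retain two edges'' inequalities and that the binding one discards the longest edge, and then pushing the explicit $3\times3$ magnitude through to the clean factorization of $f(a,b,b)$. Once the concavity-in-$c$ observation is available the remainder is immediate, since it collapses the entire region onto just two boundary slices, one of which is free from Theorem~\ref{thm:submod-R}; if organizing the factorization proves awkward, one can instead verify $f\ge 0$ on the region directly, though that is a degree-five inequality in three variables and markedly messier than the two-endpoint argument.
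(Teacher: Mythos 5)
Your proof is correct, and it takes a genuinely different route from the paper's. The paper argues via extremal configurations: writing $a=\max d_{ij}$ and $b=\min d_{ij}$, it lower-bounds every two-point magnitude by that of the closest pair, upper-bounds $\Mag(X)$ by a symmetric three-point configuration built from $a$ and $b$ using Leinster's monotonicity lemma, and then derives a contradiction to conclude that the $2\to3$ increment cannot exceed the $1\to2$ increment. You instead reduce submodularity on a three-element ground set to the single binding instance $\Mag(\{1,2\})+\Mag(\{1,3\})\ge\Mag(X)+1$ (the two shortest edges necessarily share a vertex, and two-point magnitude is increasing in distance), convert it via the matrix-determinant lemma into a polynomial inequality $f(a,b,c)\ge0$, note that the $c^2$-coefficient $(a-1)(b-1)-1$ is nonpositive so $f$ is concave in $c$ on the feasible interval $[(a-1)(b-1)+1,\,b]$, and dispatch the two endpoints: the collinear case, which you correctly recognize as an instance of Theorem~\ref{thm:submod-R}, and the isosceles case $c=b$, where the factorization $f(a,b,b)=(b-1)(a-2)^2\bigl(a-b(b-1)\bigr)$ holds (I checked it at several generic points, along with your formula $\Mag(X)+1=2(4-a^2-b^2-c^2+abc)/\det\zeta$, which is exact) and each factor is visibly nonnegative. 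Your argument has a real advantage: it works with the exact matched instances that submodularity requires, whereas the paper's proof as written compares an upper bound on the $2\to3$ increment against what it also describes as an upper bound on the $1\to2$ increment, which does not by itself yield the matched-instance inequality. The price is the polynomial bookkeeping; in a final write-up you should display the expansion of $f$ and verify the factorization of $f(a,b,b)$ symbolically rather than asserting it.
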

\begin{proof}
    Let $X = \{x_1,x_2,x_3\}$ be a 3-point space and $d$ be a metric on $X$. Then magnitude is submodular on the metric space $(X,d)$.

    Write $d_1 = d(x_1,x_2), d_2=d(x_1,x_3), d_3=(x_2,x_3)$. 
Take $a = \text{max} \{d_1,d_2,d_3\}$ and $b = \text{min} \{d_1,d_2,d_3\}$. We note that $0 < b < a$.

Let $Y$ be the space containing the 2 points $b$ apart. Then by the above corollary for any 2-point subspace $X$, $Mag(X) \geq |Y|$.

We then calculate the magnitude of $Y$ as

\[ \left| \begin{pmatrix}
1 & e^{-b} \\
e^{-b} & 1
\end{pmatrix}\right| = \frac{2e^b}{e^b+1}. \]

Furthermore, adding in the remaining point, the distances from that point and the original pair of points are at most $a$. So then again using Lemma 2.2.5 in \cite{leinster2013magnitude}, the magnitude of $X$ is at most:

\[ \left| \begin{pmatrix}
1 & e^{-b} & e^{-a} \\
e^{-b} & 1 & e^{-a} \\
e^{-a} & e^{-a} & 1
\end{pmatrix}
\right|  = \frac{(3e^{2a} - 4e^a)e^b+e^{2a}}{(e^{2a}-2)e^b + e^{2a}}. \]

It then follows that the difference in magnitude between any 3 and 2-point set is at most 

\[ \frac{(3e^{2a} - 4e^a)e^b+e^{2a}}{(e^{2a}-2)e^b + e^{2a}} - \frac{2e^b}{e^b+1} \]

\[ = \frac{(e^{2a}-4e^a+4)e^{2b}+2(e^{2a}-2e^a)e^b+e^{2a}}{(e^{2a}-2)e^{2b}+2(e^{2a}-1)e^b+e^{2a}}. \]

A similar argument shows that the increase in magnitude from a 1-point to a 2-point set is at most

\[ \frac{e^a-1}{e^a+1}. \]

Then suppose

\[ \frac{(e^{2a}-4e^a+4)e^{2b}+2(e^{2a}-2e^a)e^b+e^{2a}}{(e^{2a}-2)e^{2b}+2(e^{2a}-1)e^b+e^{2a}} \geq \frac{e^a-1}{e^a+1}. \]

This implies that 

\[\ ((e^{2a}-4e^a+4)e^{2b}+2(e^{2a}-2e^a)e^b+e^{2a})(e^a+1) \]

\[ \geq ((e^{2a}-2)e^{2b}+2(e^{2a}-1)e^b+e^{2a})(e^a-1)  \]

\[ \implies -2(e^b-1)(-e^{a+b}+e^{2a+b}+e^{2a}-e^b) \geq 0. \]

We note that as $a > b > 0$, $(-e^{a+b}+e^{2a+b}+e^{2a}-e^b) \geq 0$, thus $(e^b-1) \leq 0$ for $b \geq 0$ which is a contradiction.

Thus the increase from 1 to 2 points must be greater than or equal to the increase from 2 to 3 points.
\end{proof}

\section{More experimental results and details}\label{app:2}

\subsubsection{Larger datasets}
In Figure \ref{fig:20000_points_iterative_inversion}, we see a comparison between matrix inversion and Iterative Normalizaion for $2 \times 10^{4}$ points sampled from $\mathcal{N}(0,1)$ in $\mathbb{R}^2$ over 5 runs. Iterative Normalization is run for $10$ iteration, as we observe fast convergence towards the true magnitude value. 

The experiment was ran on a NVIDIA 2080Ti GPU with 11GB RAM.
\begin{figure}[hbt!]
    \centering
    \includegraphics[width=0.99\linewidth]{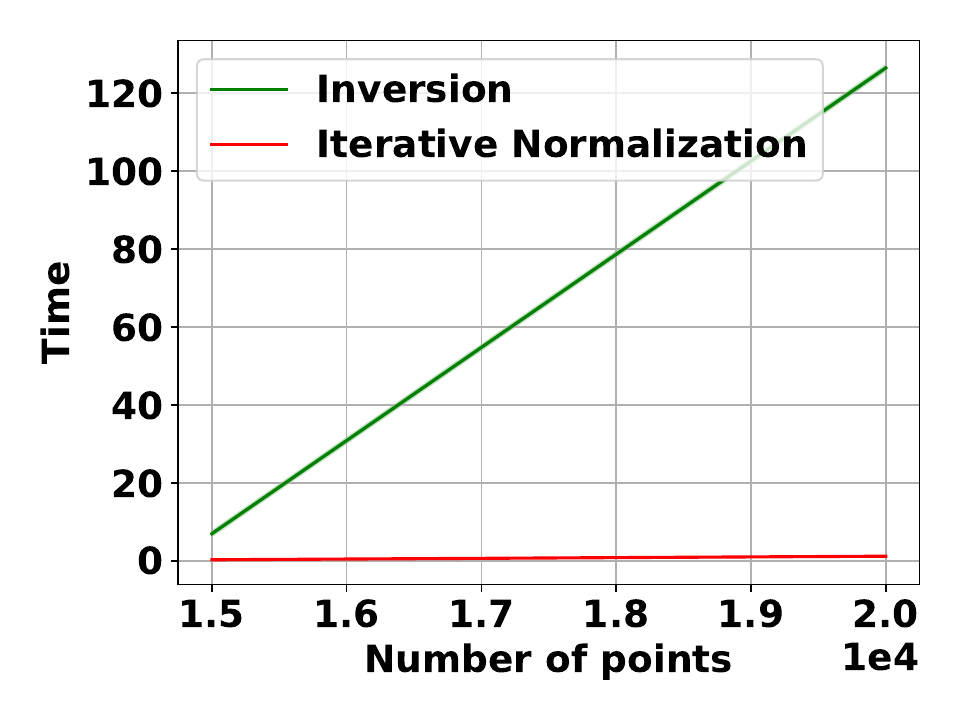}
    \caption{Time is measured in seconds. It takes $1.12$ seconds for Iterative Normalization to execute for $2 \times 10^4$ points, while Inversion requires $126.9$ seconds. }
    \label{fig:20000_points_iterative_inversion}
\end{figure}

\subsection{Further investigation of SGD algorithm}
We experimented with multiple different batch sizes, but full size of the dataset, or Gradient Descent (GD) appears to achieve fastest convergence. For completeness, here we report the convergence results when we vary the batch size. Note that while this method appears to be slower than Iterative Normalization, it can be used when the size of the dataset cannot fit into memory.

Batch sizes are $=\{8, 16, 32, 64, 128, 256\}$ and show in how many iterations the algorithm converges, and learning rate is fixed at $0.01$. Figure \ref{fig:sgd_vary_batch_size} shows mean and standard deviation, repeated over 10 runs, for 50 iterations, for a dataset with $2000$ points sampled from $\mathcal{N}(0,1)$ in $\mathbb{R}^2$.

It appears that all batch sizes tend to converge towards the true magnitude value after iteration 20.

Experiments ran on a NVIDIA 2080Ti GPU with 11GB RAM.

\begin{figure}[hbt!]
    \centering
    \includegraphics[width=0.99\linewidth]{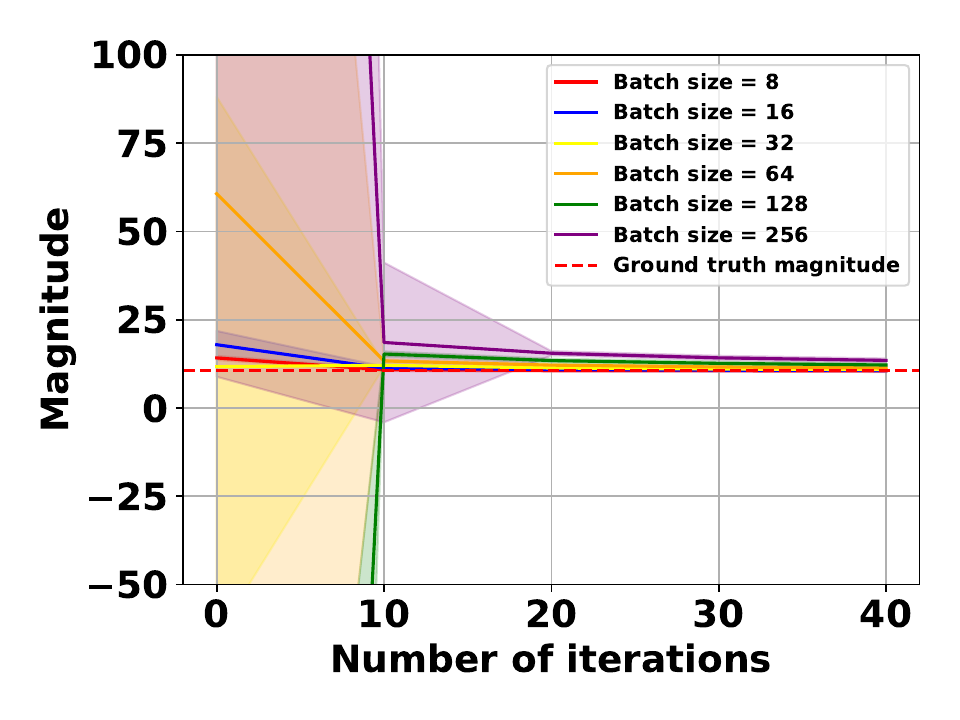}
    \caption{Varying the batch sizes and its effect on convergence towards the true magnitude value.}
    \label{fig:sgd_vary_batch_size}
\end{figure}

\subsection{Experimental details}

\subsubsection{Algorithm comparison}
We use \texttt{PyTorch}'s GPU implementation for matrix inverison. The SGD experiments used a learning rate of $0.01$ and momentum of $0.9$. We set the size of the batch to equal the size of the dataset for the GD experiments in the main paper.

\subsubsection{Training trajectories and generalization}
We use a modified version of the ViT for small datasets as per \cite{raghu2021vision}. The implementation is based on the \cite{gani2022train}, which is based on the \texttt{timm} library with the architecture parameters as follows: depth of 9, patch size of 4, token dimension of 192, 12 heads, MLP-Ratio of 2, resulting in 2697610 parameters in total, as described in more detail in \cite{andreeva2024topological}.

We start from a pre-trained weight vector, which achieves high training accuracy on the classification task. By varying the learning rate in the range $[10^{-5}, 10^{-3}]$ and the batch size between $[8,256]$, we define a grid of $6\times 6$ hyperparameters. For each pair of batch size and learning rate, we compute the training trajectory for $10^4$ iterations. We use the Adam optimizer \cite{kingma2017adam}. We compute the data-dependent pseudometric, first defined in \cite{dupuis2023generalization} by $\rho_S^{(1)}(w,w') = r^{-1}||\boldsymbol{L}_S(w) - \boldsymbol{L}_S(w')||_1 $, to obtain a distance matrix. Then we proceed to compute the quantities of interest $\mathrm{Mag_n}$ and $\mathrm{PMag_n}$ for $n=\{5000, 7000, 10000\}$, using the distance matrix as derived from the pseudometric $\rho_S^{(1)}$. We set the magnitude scale $t = \sqrt{r}$, where $r$ is the size of the training set ($r = 50000$ for CIFAR10). This value is motivated by the theory in \cite{andreeva2024topological}, and for a fair comparison with their methods. We then compute the granulated Kendall's coefficients ($\boldsymbol{\psi}_{\text{lr}}$ and $\boldsymbol{\psi}_{\text{bs}}$ for the learning rate and for batch size respectively, $\boldsymbol{\Psi}$, which is the Average Kendall coefficient (the average of $\boldsymbol{\psi}_{\text{lr}}$ and $\boldsymbol{\psi}_{\text{bs}}$) \cite{gastpar2023fantastic}), which are more relevant than the classical Kendall's coefficient for capturing causal relationships; and Kendall tau ($\tau$).

Experiments ran on a NVIDIA 2080Ti GPU with 11GB RAM. 

\subsubsection{Regularization}
We train five neural networks each with two fully connected hidden layers on the MNIST dataset for 2000 epochs, using cross entropy loss on MNIST and a learning rate of $0.001$. 

Experiments ran on NVIDIA GeForce GTX 1060 6GB GPU.

\begin{figure}[hbt!]
    \centering
    \includegraphics[width=0.8\linewidth]{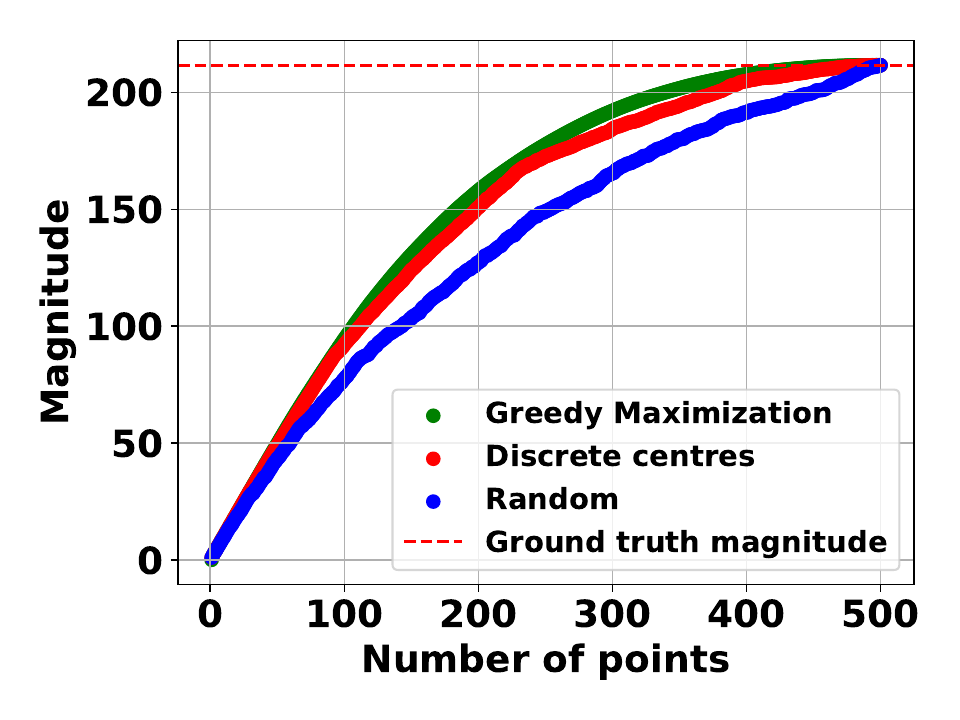}
    \caption{Comparison of subset selection on the Swiss roll with subsample of 500 points.}
    \label{fig:swiss_roll_example}
\end{figure}

\subsubsection{Clustering}
For DBSCAN, we used $\epsilon = 10$ and minimum number of clusters = $2$. For
$k$-means and Agglomerative clustering, the minimum number of clusters was set to the number of cluster centers used to generate the dataset.

Experiments ran on Intel Xeon CPU E5-2603 v4 CPU with 3GB memory.

\subsection{More experimental Results - Clustering}
In Figure \ref{fig:mega-clustering-more-results}, we show more results of the magnitude clustering algorithm using a number of different random seeds for dataset generation.

\begin{figure*}
\begin{tabular}{cc}
    \centering
    \includegraphics[width=0.5\linewidth]{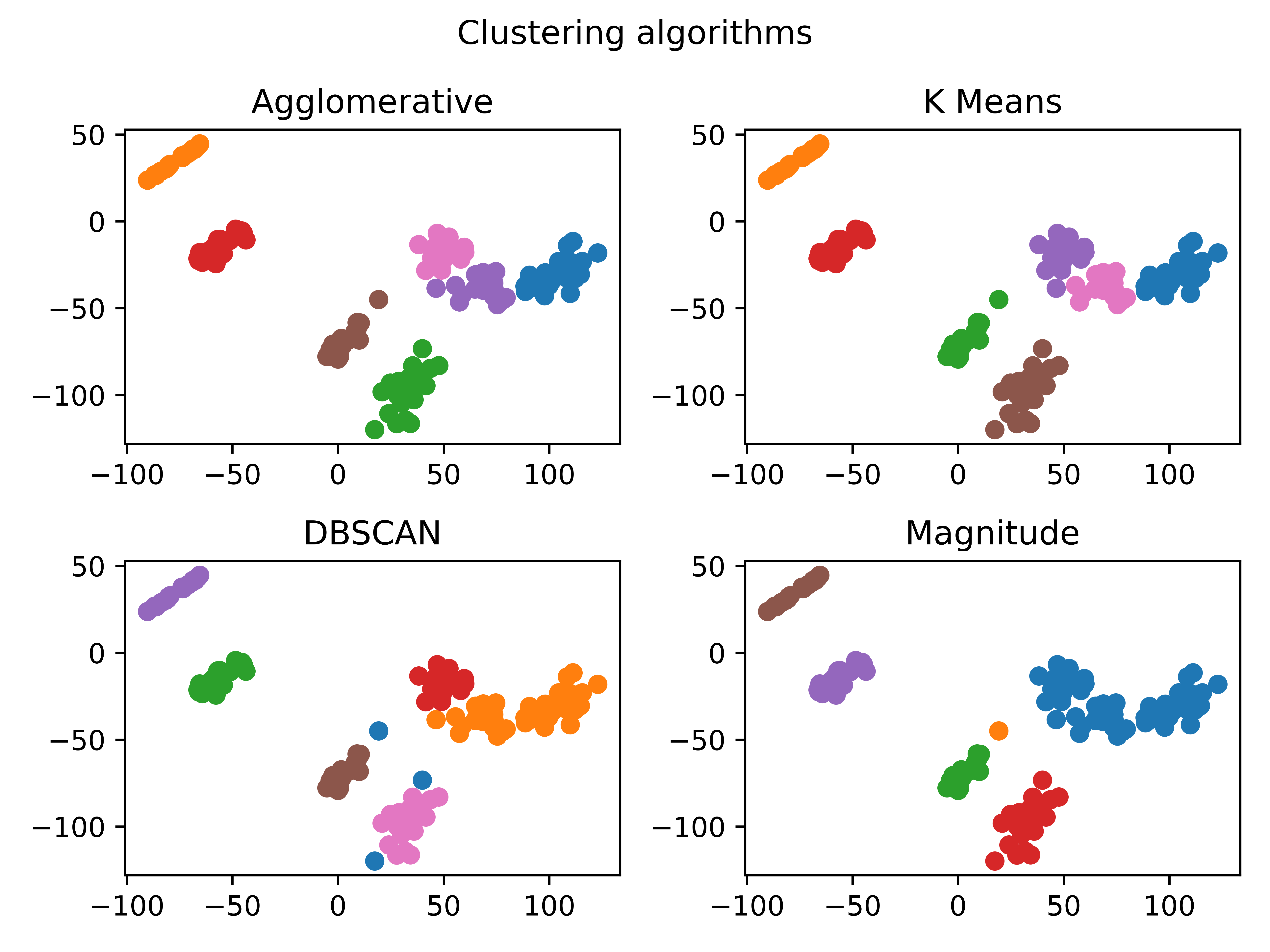}

    &
    \includegraphics[width=0.5\linewidth]{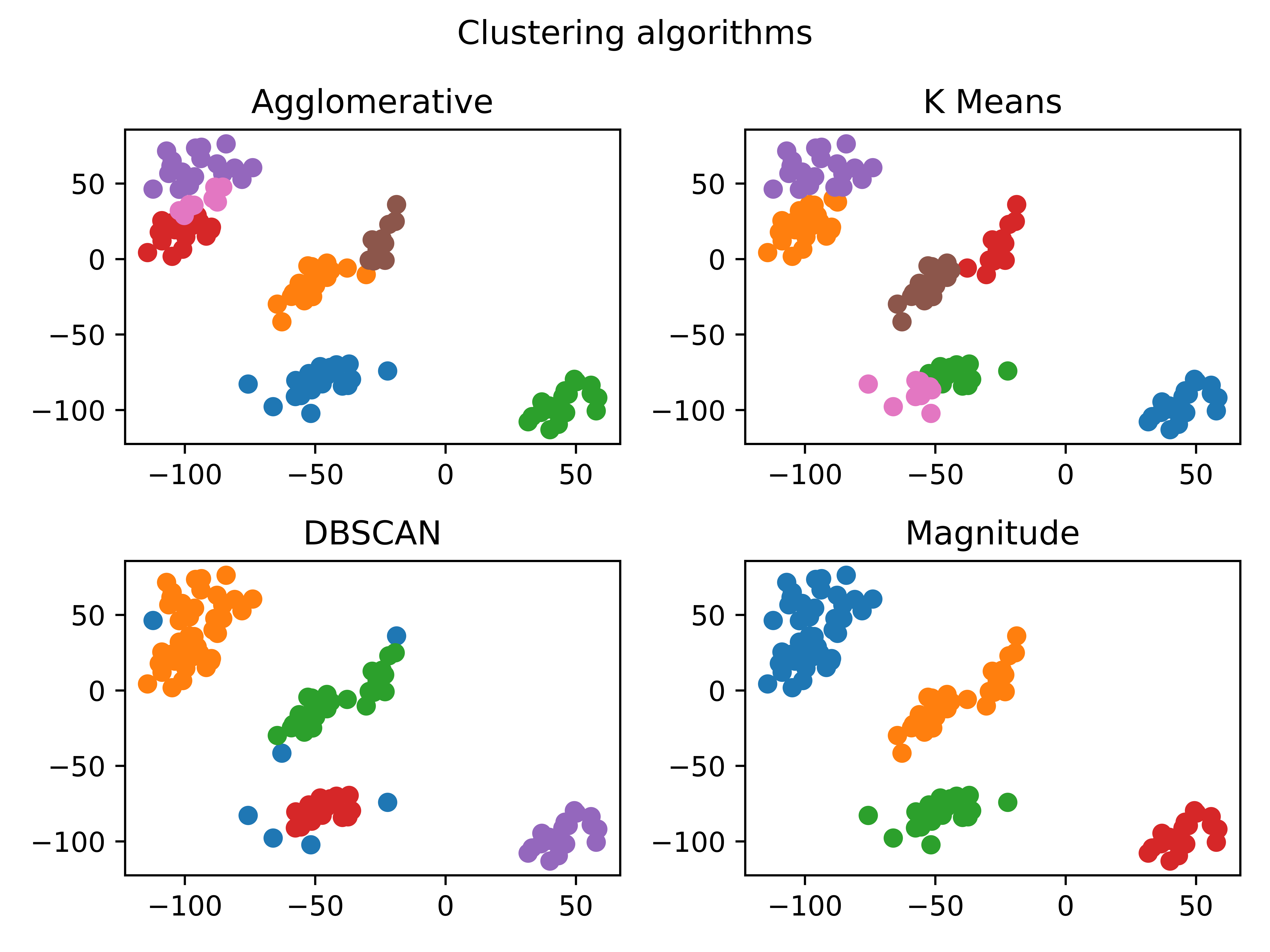}

    \\
    \includegraphics[width=0.5\linewidth]{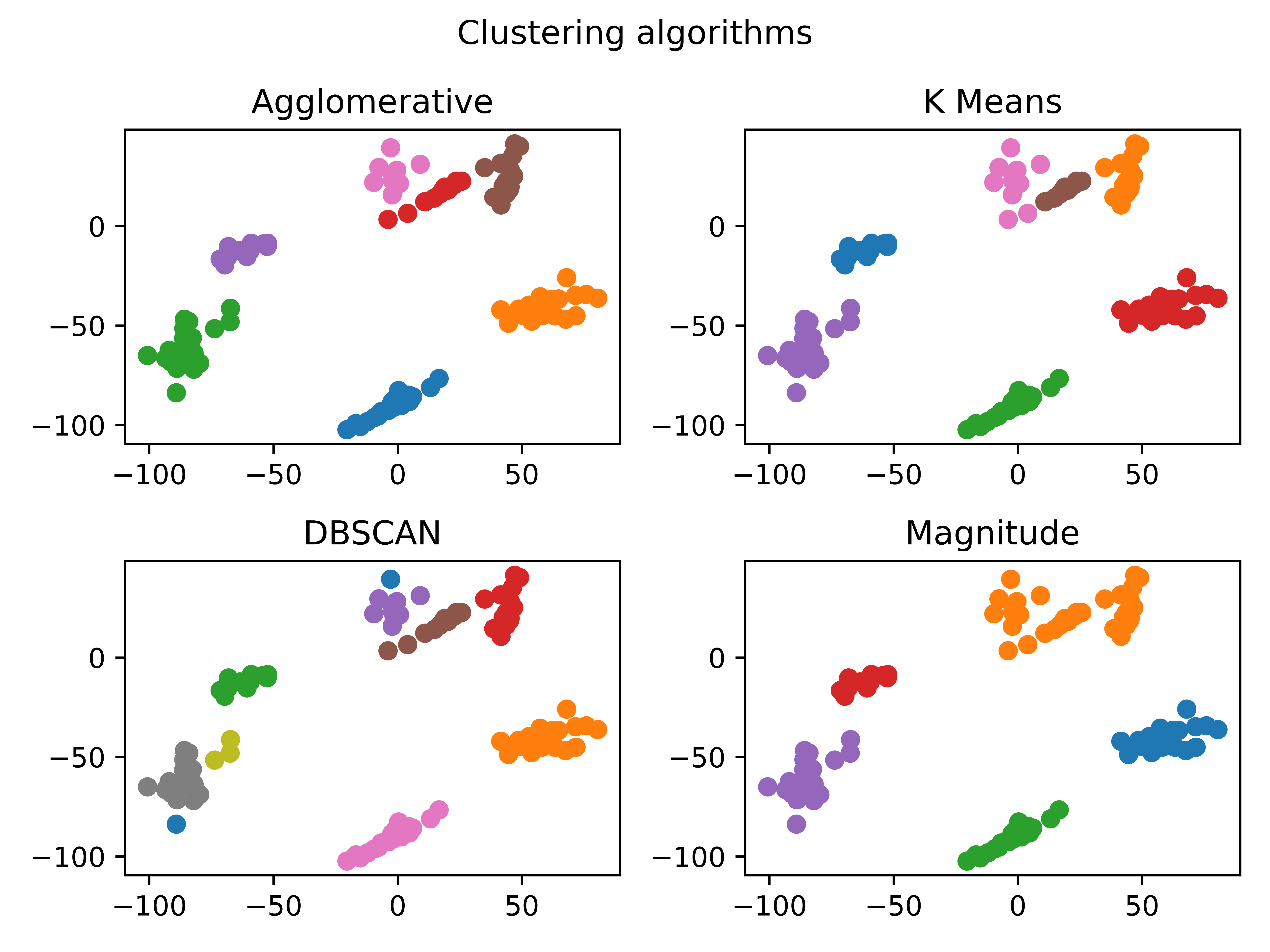}

    &
    
    \centering
    \includegraphics[width=0.5\linewidth]{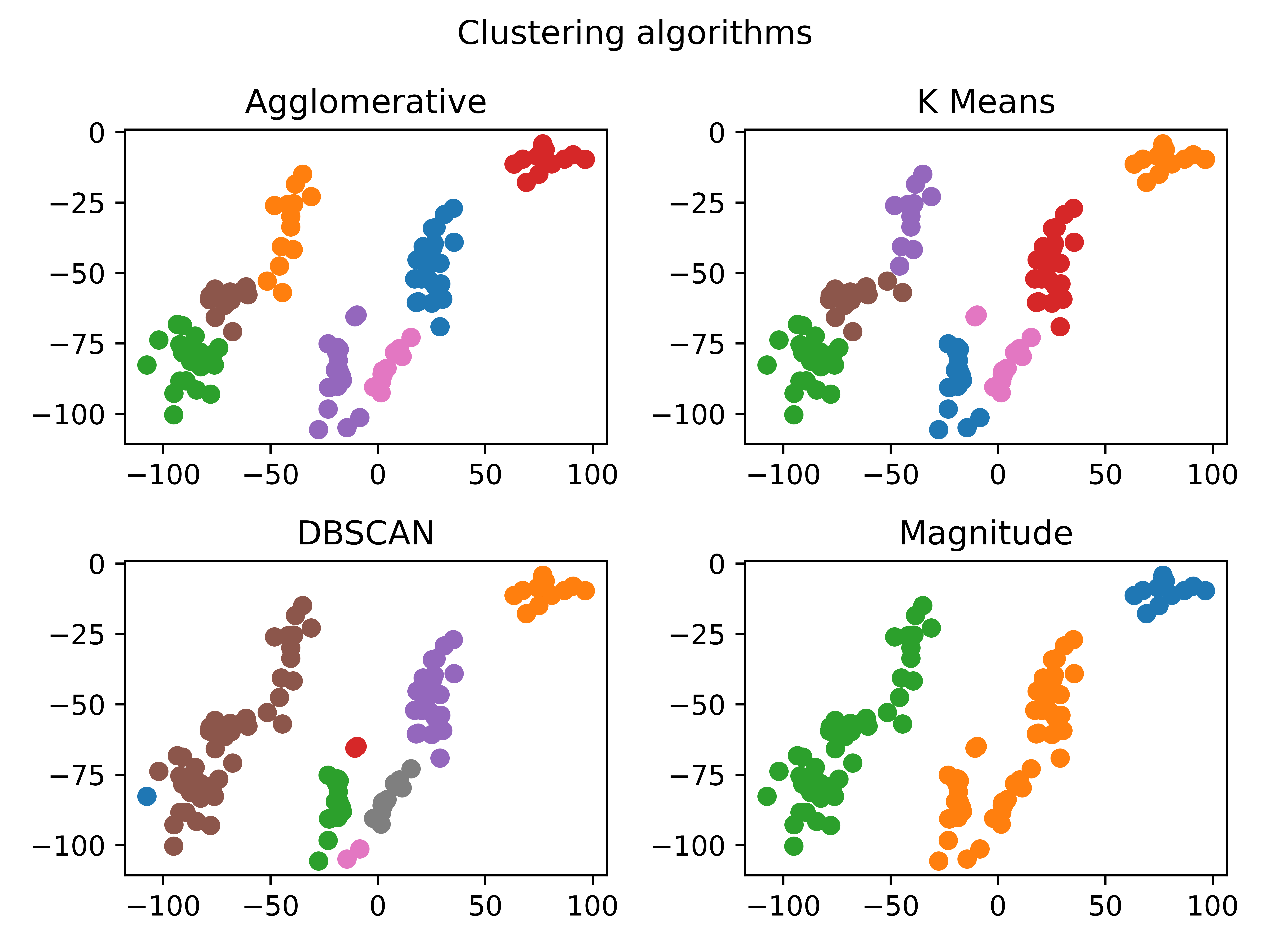}
\end{tabular}
\caption{Magnitude clustering algorithm versus conventional algorithms on randomly generated datasets. The difference between the plots comes from using a different random seed to generate the datasets. We note that the magnitude algorithm consistently identifies a reasonable number of clusters and provides a sensible cluster assignment to each point.}
    \label{fig:mega-clustering-more-results}
\end{figure*}

\subsection{Subset selection}
Figure \ref{fig:swiss_roll_example} shows an example of a synthetic dataset called the Swiss roll. The Discrete centers algorithm produces a hierarchy which provides almost the same approximation as the Greedy Maximization algorithm for a fraction of the computational cost.
In Figure \ref{fig:magnitude_sklearn_datasets1} we see a comparison between Greedy Maximization, Discrete Centers and selecting points at random (Random) for the 3 standard \texttt{scikit-learn} datasets Iris, Breast cancer and Wine dataset. We have used the entire dataset for generating the plot. In Figure \ref{fig:magnitude_sklearn_datasets2} we see the same comparison, but with a random subset of 500 points from MNIST, CIFAR10 and CIFAR100. We have reduced the dimensions of each dataset using PCA to 100.

 \begin{figure*}[hbt!]
    \begin{tabular}{ccc}
        \includegraphics[width=0.33\textwidth]{Figures/Sklearn_datasets/iris_dataset_bigger_font.pdf} & \includegraphics[width=0.33\textwidth]{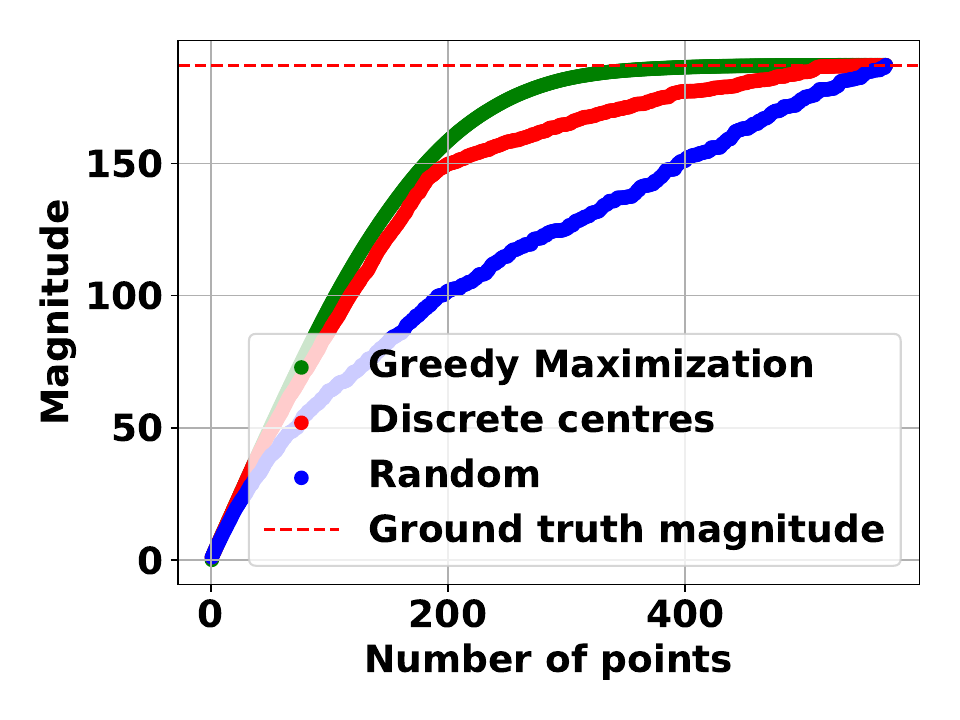} &
        \includegraphics[width=0.33\textwidth]{Figures/Sklearn_datasets/wine_dataset_bigger_font.pdf}  
        \\
        \\
        (a) & (b) & (c)\\
    \end{tabular}
    
    \caption{\textbf{Discrete centers are close to Greedy Maximization at a fraction of the computational cost and better than random.} In plot (a) we have the Iris dataset, in plot (b) the Breast cancer dataset, in plot (c) the Wine dataset.}
    \label{fig:magnitude_sklearn_datasets1}
\end{figure*}

 \begin{figure*}
    \begin{tabular}{ccc}
      \includegraphics[width=0.33\textwidth]{Figures/Real_world_datasets/mnist_real_subsample_500_final.pdf} &         \includegraphics[width=0.33\textwidth]{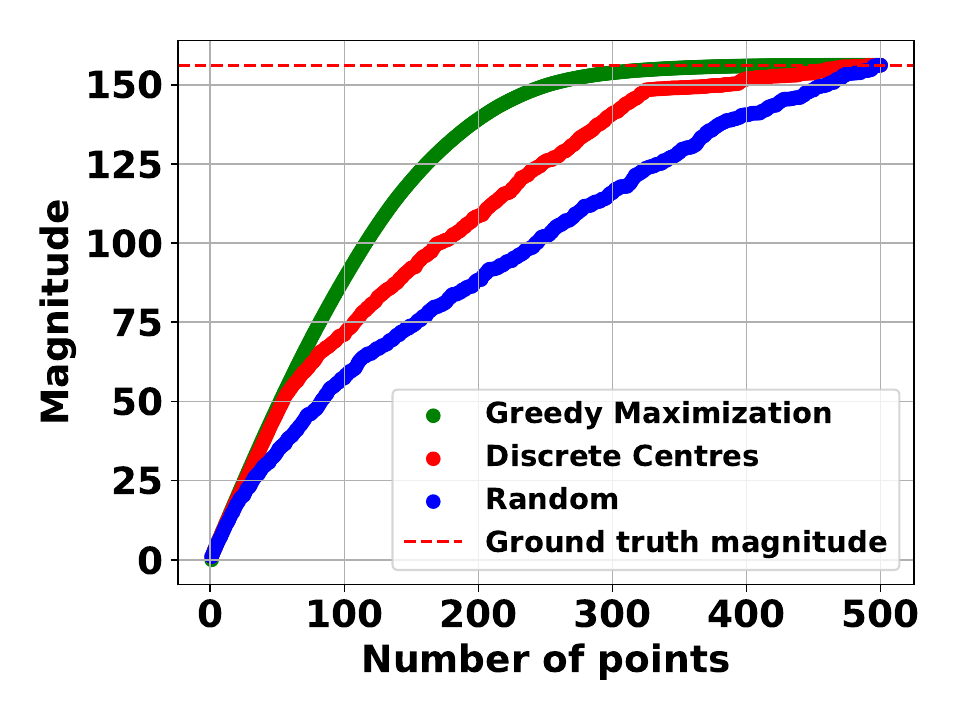} &
        \includegraphics[width=0.33\textwidth]{Figures/Real_world_datasets/cifar100_real_subsample_500_final.pdf}
        \\
        \\
        (a) & (b) & (c) \\
    \end{tabular}
    
    \caption{\textbf{Discrete centers are close to Greedy Maximization at a fraction of the computational cost and better than random.} In plot (a) we a subsample from MNIST dataset, in plot (b) from CIFAR10, and in plot (c) we see a subsample of CIFAR100.}
    \label{fig:magnitude_sklearn_datasets2}
\end{figure*}

\subsection{Magnitude of a compact space}

For completeness, we provide the formal definition of magnitude for compact sets, and a few important results.

\begin{definition}
    A metric space is positive definite if every finite subspace is positive definite. The magnitude of a compact positive definite space $A$ is
    \begin{equation}
    \mathrm{Mag}(A) = \sup\{ \mathrm{Mag}(B): B \text{ is a finite subspace of } A\} \in [0, \infty].
    \end{equation}
\end{definition}

\begin{definition}
    Let a weight measure for a compact space $A$ be a signed measure $\mu \in M(A)$ such that, for all $a \in A$,
    \begin{equation}
        \int e^{-d(a,b)}d\mu(b) = 1.
    \end{equation}
    Then $\mathrm{Mag}(X) = \mu(A)$ whenever $\mu$ is a weight measure for $A$.
\end{definition}

\begin{theorem}[Theorem 5.4 in \cite{meckes2015magnitude}]
    Let $A \subset \mathbb{R}^n$ be compact and $t \geq 1$. Then
    \begin{align}
        \frac{\mathrm{Mag}(A)}{t} \leq \mathrm{Mag}(tA) \leq t^n\mathrm{Mag}(A)
    \end{align}
\end{theorem}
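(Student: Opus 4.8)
The plan is to reduce to finite subspaces, move everything to the Fourier side, and then compare the resulting quadratic forms by an elementary pointwise inequality. By definition of magnitude of a compact set, $\Mag(tA) = \sup\{\Mag(tY) : Y\subseteq A \text{ finite}\}$ and $\Mag(A) = \sup\{\Mag(Y) : Y\subseteq A \text{ finite}\}$. So both inequalities follow once I establish, for every finite $Y \subseteq \mathbb{R}^n$ and every $t \ge 1$,
\[ \frac{\Mag(Y)}{t} \;\le\; \Mag(tY) \;\le\; t^n\,\Mag(Y), \]
and then take suprema over $Y$ (for the upper bound: $\sup_Y t^n\Mag(Y) = t^n\Mag(A)$; for the lower bound: $\sup_Y \tfrac1t\Mag(Y) = \tfrac1t\Mag(A)$).

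For finite $Y = \{y_1,\dots,y_m\}$ the matrix $\zeta_{sY} = \big(e^{-s\lVert y_i - y_j\rVert}\big)_{ij}$ is symmetric positive definite for every $s>0$, so Cauchy--Schwarz in the inner product $\langle a,b\rangle = a^{\mathsf T}\zeta_{sY}\,b$ gives the variational formula $\Mag(sY) = \mathbb{1}^{\mathsf T}\zeta_{sY}^{-1}\mathbb{1} = \max_{v\neq 0} \frac{(\mathbb{1}^{\mathsf T} v)^2}{v^{\mathsf T}\zeta_{sY}\,v}$. Consequently it suffices to sandwich the quadratic forms in the Loewner (positive-semidefinite) order: I will prove $t^{-n}\zeta_Y \preceq \zeta_{tY} \preceq t\,\zeta_Y$, which, upon dividing into the variational formula and maximising over $v$, yield the upper and lower bounds respectively.

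The key step is a Bochner/Parseval representation of these quadratic forms. Write $\nu_v = \sum_i v_i\,\delta_{y_i}$, and use that the Fourier transform of $x \mapsto e^{-s\lVert x\rVert}$ on $\mathbb{R}^n$ equals, up to a dimensional constant $C_n$, the function $\phi_s(\xi) = \dfrac{s}{(s^2 + \lVert\xi\rVert^2)^{(n+1)/2}}$. Parseval then gives
\[ v^{\mathsf T}\zeta_{sY}\,v \;=\; \sum_{i,j} v_i v_j\, e^{-s\lVert y_i - y_j\rVert} \;=\; C_n \int_{\mathbb{R}^n} \bigl|\widehat{\nu_v}(\xi)\bigr|^2\,\phi_s(\xi)\,d\xi. \]
Since $|\widehat{\nu_v}(\xi)|^2 \ge 0$ everywhere, the matrix comparison reduces to a scalar pointwise comparison of $\phi_t$ with $\phi_1$: a direct calculation shows that for $t \ge 1$ one has $t^{-n}\phi_1(\xi) \le \phi_t(\xi) \le t\,\phi_1(\xi)$ for all $\xi$, each inequality collapsing to $t^2 \ge 1$ after clearing denominators. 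Integrating these against the nonnegative weight $|\widehat{\nu_v}|^2$ yields $t^{-n}\zeta_Y \preceq \zeta_{tY} \preceq t\,\zeta_Y$, which closes the argument. Note that no positivity assumption on the magnitude weights is needed, precisely because the Fourier side produces the manifestly nonnegative factor $|\widehat{\nu_v}|^2$.

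The main obstacle is setting up the Fourier representation cleanly: one must (i) justify the explicit formula for the Fourier transform of the Poisson-type kernel $e^{-s\lVert\cdot\rVert}$ on $\mathbb{R}^n$ (a classical but not entirely trivial Bessel-potential computation), and (ii) apply Parseval to the atomic measure $\nu_v$ with a little care — there is no real convergence issue since $\nu_v$ is a finite combination of Diracs and $\phi_s \in L^1(\mathbb{R}^n)$, but the identity is most safely phrased through tempered distributions or a smoothing/approximation. Everything downstream — the elementary inequalities for $\phi_t$, the variational characterisation of $\Mag$, and the passage to the supremum over finite subspaces for compact $A$ — is routine.
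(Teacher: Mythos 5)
The paper does not actually prove this statement: it is quoted verbatim from \citet{meckes2015magnitude} in the appendix ``for completeness,'' so there is no in-paper proof to compare against. Judged on its own, your argument is correct and self-contained. The reduction to finite subspaces is legitimate (finite subsets of $tA$ are exactly the sets $tY$ with $Y\subseteq A$ finite, and multiplying by the positive constants $t^n$ and $1/t$ commutes with the supremum); the variational identity $\mathbb{1}^{\mathsf T}\zeta^{-1}\mathbb{1}=\max_{v\neq 0}(\mathbb{1}^{\mathsf T}v)^2/(v^{\mathsf T}\zeta v)$ is valid for positive definite $\zeta$ and correctly converts the two Loewner inequalities into the two magnitude bounds (smaller quadratic form in the denominator gives larger magnitude, hence $\zeta_{tY}\preceq t\zeta_Y$ gives the lower bound and $t^{-n}\zeta_Y\preceq\zeta_{tY}$ the upper); the Fourier transform of $e^{-s\lVert\cdot\rVert}$ is indeed $C_n\,s\,(s^2+\lVert\xi\rVert^2)^{-(n+1)/2}$, and both pointwise inequalities $t^{-n}\phi_1\le\phi_t\le t\,\phi_1$ do reduce to $t^2\ge 1$ (for the lower one, after raising to the power $2/(n+1)$ it becomes $t^2+\lVert\xi\rVert^2\le t^2+t^2\lVert\xi\rVert^2$). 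This is essentially the same Fourier-analytic mechanism Meckes uses, but you run it at the elementary level of finite similarity matrices rather than through the reproducing-kernel Hilbert space characterisation of magnitude of compact sets, which makes the argument more accessible; the only ingredient you import without proof is the classical Fourier transform of the kernel, which you flag explicitly. One presentational nit: state the normalisation of Parseval you are using (the constant $C_n$ and the $(2\pi)^{-n}$ are harmless since they appear on both sides of every comparison, but saying so removes any doubt).
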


\begin{theorem}[Theorem 1 in \cite{barcelo2018magnitudes}]
\label{thm:compactboundedmag}
    Let $X$ be a nonempty compact set in $\mathbb{R}^n$. Then
    \begin{align}
        \Mag(tX) \rightarrow 1 \text{ as } t \rightarrow 0
    \end{align}
    and
    \begin{align}
    t^{-n}\Mag(tX) \rightarrow \frac{Vol(X)}{n!\omega_{n}} \text{ as } t \rightarrow \infty.
    \end{align}
\end{theorem}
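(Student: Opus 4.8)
The plan is to pass through the potential-theoretic reformulation of magnitude for Euclidean sets, following the ideas behind \cite{barcelo2018magnitudes,meckes2015magnitude}. Write $c_n = n!\,\omega_n$ and record the normalization $\int_{\mathbb{R}^n} e^{-|x|}\,dx = n\omega_n\int_0^\infty e^{-r}r^{n-1}\,dr = c_n$, so that the target constant is exactly $1/c_n$. For compact $X$ let $\mu$ be its weight measure (Definition of weight measure above), so the potential $u(x)=\int e^{-|x-y|}\,d\mu(y)$ equals $1$ on $X$ and $\Mag(X)=\mu(X)$. The first step is the differential identity that, for odd $n$, $(I-\Delta)^{(n+1)/2}e^{-|x|}=c_n\,\delta_0$ as distributions (checked directly for $n=1$, where $(I-\partial^2)e^{-|x|}=2\delta_0$ and $c_1=2$). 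Hence $c_n\mu = (I-\Delta)^{(n+1)/2}u$, and integrating over $\mathbb{R}^n$ — where every $\Delta$-term drops out by the divergence theorem for the decaying field $u$ — yields the clean identity
\[ \Mag(X) = \frac{1}{c_n}\int_{\mathbb{R}^n} u(x)\,dx. \]
This single formula is the engine for both limits.

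For the $t\to 0$ statement I would first obtain the lower bound $\Mag(tX)\ge 1$ for free from monotonicity of magnitude under inclusion, since $tX$ contains a point and a single point has magnitude $1$. For the matching upper bound I would enclose $tX$ in a shrinking ball: $X\subseteq \bar B(x_0,R)$ gives $\Mag(tX)\le \Mag(\bar B(0,tR))$, and as $s\to 0$ the weight measure of $\bar B(0,s)$ converges weakly to a unit point mass with potential tending to $e^{-|\cdot|}$, so the formula gives $\Mag(\bar B(0,s))=\tfrac1{c_n}\int u\to \tfrac1{c_n}\int e^{-|x|}\,dx=1$. The two bounds together force $\Mag(tX)\to 1$.

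The substance is the $t\to\infty$ asymptotic, where the formula makes the mechanism transparent. On the scaled set $tX$ the potential satisfies $u\equiv 1$ on $tX$ and solves $(I-\Delta)^{(n+1)/2}u=0$ off $X$, so it departs from the constant $1$ only in a boundary layer of width $O(1)$ about $\partial(tX)$, where it decays to $0$. Thus $\int_{\mathbb{R}^n}u = \mathrm{Vol}(tX) + (\text{boundary correction})$, the correction being governed by surface area and hence of order $O(t^{n-1})$, giving $\Mag(tX)=\tfrac{1}{c_n}\mathrm{Vol}(tX)+O(t^{n-1})=\tfrac{t^n}{c_n}\mathrm{Vol}(X)+O(t^{n-1})$ and therefore $t^{-n}\Mag(tX)\to \mathrm{Vol}(X)/(n!\,\omega_n)$. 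I would make this rigorous by separating the two bounds: for the lower bound, insert the explicit test measure $\mu=c_n^{-1}\mathbf{1}_{tX}\,dx$ into the energy (Rayleigh-quotient) characterization of magnitude from \cite{meckes2015magnitude}, using $\int_{tX}e^{-|x-y|}\,dy\to c_n$ for $x$ deep inside $tX$ to get a Rayleigh quotient of $t^n\mathrm{Vol}(X)/c_n$; for the upper bound, control the true potential's boundary layer directly through the identity above.

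The main obstacle is exactly this boundary-layer control, together with two regularity issues. First, for a general compact $X$ with irregular (or measure-zero) boundary, or with $\mathrm{Vol}(X)=0$, one must justify that the interior bulk density really is $1/c_n$ and that the correction is genuinely $o(t^n)$; this is where one needs either a mild regularity hypothesis on $\partial X$ or an approximation argument squeezing $X$ between inner and outer regular sets. Second, for even $n$ the operator $(I-\Delta)^{(n+1)/2}$ is nonlocal (a half-integer power), so the tidy distributional computation must be replaced by the corresponding Bessel-potential and pseudodifferential analysis, and the boundary-layer estimate becomes the delicate technical heart of the argument.
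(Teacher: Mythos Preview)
The paper does not prove this theorem at all: it is quoted in the appendix, with attribution, as Theorem~1 of \cite{barcelo2018magnitudes} among a handful of background results listed ``for completeness''. There is therefore no in-paper proof to compare your attempt against.

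For what it is worth, your sketch is a faithful outline of the actual Barcel\'o--Carbery argument. The integral identity $\Mag(X)=c_n^{-1}\int_{\mathbb{R}^n}u$, obtained from the distributional relation $(I-\Delta)^{(n+1)/2}e^{-|x|}=c_n\delta_0$, is precisely their starting point; the $t\to 0$ limit is handled via monotonicity and comparison with shrinking balls; and the $t\to\infty$ asymptotic is exactly the bulk-plus-boundary-layer decomposition you describe. You have also correctly isolated the two real difficulties --- the nonlocal half-integer operator in even dimensions, and boundary regularity for general compact $X$ --- which in \cite{barcelo2018magnitudes} are dealt with, respectively, through the Bessel-potential/fractional-Sobolev framework and through inner/outer approximation by regular domains combined with monotonicity of magnitude. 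So your proposal is on the right track relative to the cited source, but there is simply nothing in the present paper to weigh it against.
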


\end{document}